\DeclareMathOperator{\argmin}{arg\,min}
\newtheorem{theorem}{Theorem}
\newtheorem{assumption}{Assumption}
\newtheorem{remark}{Remark}
\newtheorem{problem}{Problem}
\newtheorem{lemma}{Lemma}
\newtheorem{definition}{Definition}
\newtheorem{proposition}{Proposition}
\title{New Versions of Gradient Temporal Difference Learning}
\author{Donghwan Lee, Han-Dong Lim, Jihoon Park, and Okyong Choi
\thanks{$^*$This material is based upon work supported by the National Research Foundation under Grant NRF-2021R1F1A1061613. The work was supported by the BK21 FOUR from the Ministry of Education (Republic of Korea) (Corresponding author: Donghwan Lee). This work was supported by Institute of Information communications Technology Planning Evaluation (IITP) grant funded by the Korea government (MSIT)(No.2022-0-00469)}
\thanks{D. Lee, H. Lim, and J. Park are with the Department of Electrical Engineering,
KAIST, Daejeon, 34141, South Korea {\tt\small
donghwan@kaist.ac.kr}. O. Choi is with Shinhan bank research team, Seoul, South Korea.}
}
\begin{document}

\maketitle

\begin{abstract}
Sutton, Szepesv\'{a}ri and Maei introduced the first gradient temporal-difference (GTD) learning algorithms compatible with both linear function approximation and off-policy training. The goal of this paper is (a) to propose some variants of GTDs with extensive comparative analysis and (b) to establish new theoretical analysis frameworks for the GTDs. These variants are based on convex-concave saddle-point interpretations of GTDs, which effectively unify all the GTDs into a single framework, and provide simple stability analysis based on recent results on primal-dual gradient dynamics. Finally, numerical comparative analysis is given to evaluate the new approaches.
\end{abstract}

\begin{IEEEkeywords}
Reinforcement learning (RL), temporal-difference (TD) learning, optimization, saddle-point problem, convergence, stability
\end{IEEEkeywords}

\section{Introduction}

Temporal-difference (TD) learning~\cite{sutton1988learning} is one of the most popular reinforcement learning (RL) algorithms~\cite{sutton1998reinforcement} for policy evaluation problems. However, its main limitation lies in its inability to accommodate both off-policy learning and linear function approximation for convergence guarantees, which has been an important open problem for decades. In 2009, Sutton, Szepesv\'{a}ri, and Maei~\cite{sutton2009convergent,sutton2009fast} introduced the first TD learning algorithms compatible with both linear function approximation and off-policy training based on gradient estimations, which are thus called gradient temporal-difference learning (GTD).

The goal of this paper is to propose new variants of GTDs, and new analysis template for convergence analysis. The main pathways to these developments are based on convex-concave saddle-point interpretations of GTDs, which were first introduced in~\cite{macua2015distributed} based on the Lagrangian duality and in~\cite{dai2018sbeed} based on the Fenchel duality~\cite{Boyd2004}. In particular, GTD2, proposed in~\cite{sutton2009fast}, can be interpreted as a stochastic primal-dual gradient dynamics (PDGD) of a convex-concave saddle-point problem, and hence, its convergence analysis can be approached from a different angle using optimization theory~\cite{macua2015distributed,dai2018sbeed}.
These interpretations were subsequently applied to distributed RL problems in~\cite{cassano2019distributed,ding2019fast,wai2018multi,lee2022distributed}.

Although the saddle-point perspectives can provide unified viewpoints and greater flexibilities in analysis \& design of GTDs and RLs, to the authors' knowledge, their potentials have not been fully investigated yet. Motivated by this insight, we develop new versions of GTD which are unified with GTD2~\cite{sutton2009fast} in a single framework through saddle-point formulations. The main contributions of this paper are summarized as follows:
\begin{enumerate}
\item New algorithms: Three new versions of GTDs are proposed, which are named GTD3, GTD4, and GTD5. These variants, especially GTD4 and GTD5, can be viewed as regularized GTD2 algorithms, where the regularization potentially improves the convergence empirically. From simulation experiments, their convergence and performance are evaluated.

\item Unified saddle-point perspectives: The proposed versions can be interpreted in a unified way based on saddle-point interpretations, which are derived from slightly different angles from those in~\cite{macua2015distributed,dai2018sbeed}.

\item Comparative analysis: Comprehensive numerical experiments are given to compare convergence of the proposed GTDs and GTD2 in~\cite{sutton2009fast}. Empirically, it turns out that the proposed GTD4 and GTD5 tend to converge faster than the other methods for the randomly generated 5000 environments.

\item General analysis templates: In existing GTD algorithms, the convergence analysis mainly exploits the ODE (ordinary differential equation) model-based stochastic approximation theory~\cite{kushner2003stochastic}, where the main challenge is proving the asymptotic stability of the ODE model corresponding to the underlying algorithm. This approach does not allow general and formal analysis frameworks because the asymptotic stability of the ODE model significantly depends on the specific algorithm, and it is in general hard to establish the stability of the ODE model. On the other hand, the proposed analysis applies the recent asymptotic stability theory of primal-dual gradient dynamics (PDGD)~\cite{qu2018exponential}, where control theoretic frameworks for stability analysis of PDGD are developed. Using this recent result, we provide a new template for convergence analysis of RL algorithms based on the saddle-point formulations. This framework leads to simple and unified convergence analysis for linear RL algorithms based on saddle-point formulations. Moreover, the new template allows us to easily analyze the proposed new versions of GTD algorithms derived based on the saddle-point formulations. This template can potentially be applied to many other RL variants in the future.

\end{enumerate}

Related previous works are briefly summarized as follows: As mentioned before, saddle-point perspectives of GTDs and RLs were introduced in~\cite{macua2015distributed,dai2018sbeed} based on the Lagrangian duality~\cite{macua2015distributed} and Fenchel duality~\cite{dai2018sbeed}. These ideas were applied to distributed RL problems in~\cite{cassano2019distributed,ding2019fast,wai2018multi,lee2022distributed}.
Even though they and the proposed saddle-point framework lead to the same algorithm, the latter one is derived from a slightly different way based on a simple constrained convex optimization formulation, which are compatible with techniques in~\cite{qu2018exponential}.
In addition, we note that GTD5 proposed in this paper can be interpreted as GTD2 with a quadratic regularization term, which was also used in the distributed RLs in~\cite{cassano2019distributed,ding2019fast,wai2018multi}. Compared to them, GTD5 focuses on the single agent case, has different algorithmic structures, and uses diminishing weights on the regularization term in the comparative analysis. The so-called TD with Regularized Corrections (TDRC) was introduced in~\cite{ghiassian2020gradient}, which adds an additional term to TDC updates in~\cite{sutton2009fast} corresponding to $l_2$ regularization, and~\cite{patterson2022generalized} extends the ideas of GTDs to nonlinear function approximations.

\section{Preliminaries}
\subsection{Markov decision process}
A Markov decision process (MDP) is characterized by a quadruple ${\mathcal M}: =
({\cal S},{\mathcal A},P,r,\gamma)$, where ${\mathcal S}$ is a finite
state-space, $\cal A$ is a finite action
space, $P(s'|s,a)$ represents the (unknown)
state transition probability from state $s$ to $s'$ given action
$a$, $r:{\mathcal S}\times {\mathcal A}\times {\mathcal S}\to
{\mathbb R}$ is the reward
function, and $\gamma \in (0,1)$ is the discount factor. In particular, if action
$a$ is selected with the current state $s$, then the state
transits to $s'$ with probability $P(s'|s,a)$ and incurs a
reward $r(s,a,s')$. The stochastic policy is a map $\pi:{\mathcal S} \times
{\mathcal A}\to [0,1]$ representing the probability, $\pi(a|s)$, of selecting action $a$ at the current state $s$, $P^\pi$ denotes the transition matrix under policy $\pi$, and $d^{\pi}:{\mathcal S} \to {\mathbb R}$ denotes the stationary distribution of the state $s\in {\mathcal S}$ under $\pi$. We also define
$R^\pi(s)$ as the expected reward given the policy $\pi$ and the current state $s$. The infinite-horizon discounted value function with policy $\pi$ is $J^\pi(s):={\mathbb E} \left[ \left. \sum_{k = 0}^\infty {\gamma
^k r(s_k,a_k,s_{k+1})} \right|s_0=s \right]$, where ${\mathbb E}$ stands for the expectation taken with respect to the state-action trajectories under $\pi$. Given pre-selected basis (or feature) functions $\phi_1,\ldots,\phi_q:{\mathcal S}\to {\mathbb R}$, the matrix, $\Phi \in {\mathbb R}^{|{\mathcal S}| \times q}$, called the feature matrix, is defined as a matrix whose $s$-th row vector is $\phi(s):=\begin{bmatrix} \phi_1(s) &\cdots & \phi_q(s) \end{bmatrix}$. Throughout the paper, we assume that $\Phi \in {\mathbb R}^{|{\mathcal S}| \times q}$ is a full column rank matrix. The policy evaluation problem is the problem of estimating $J^{\pi}$ given a policy $\pi$.

\subsection{Basics of nonlinear system theory}
We will briefly review basic nonlinear system theory, which will play an important role in convergence analysis and stochastic approximation methods. Consider the nonlinear system
\begin{align}
\frac{d}{dt}x_t=f(x_t),\quad x_0\in {\mathbb R}^n,\quad t \geq 0,\label{eq:nonlinear-system}
\end{align}
where $x_t\in {\mathbb R}^n$ is the state, $t\geq 0$ is the time, $x_0\in {\mathbb R}^n$ is the initial state, and $f:{\mathbb R}^n \to {\mathbb R}^n$ is a nonlinear mapping. For simplicity, we assume that the solution to~\eqref{eq:nonlinear-system} exists and is unique. In fact, this holds true so long as the mapping $f$ is globally Lipschitz continuous.
\begin{lemma}[{\cite[Thm.~3.2]{khalil2002nonlinear}}]\label{lemma:existence}
Consider the nonlinear system~\eqref{eq:nonlinear-system}, and assume that $f$ is globally Lipschitz continuous, i.e., $\|f(x)-f(y)\|\le l \|x-y\|, \forall x,y \in {\mathbb R}^n$ for some $l>0$ and norm $\|\cdot\|$. Then, it admits a unique solution $x_t$ for all $t\geq 0$ and $x_0\in {\mathbb R}^n$.
\end{lemma}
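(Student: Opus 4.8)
The plan is to recast the initial value problem as an equivalent integral equation and solve it by the Banach fixed-point theorem. First I would note that a continuously differentiable $x_t$ solves~\eqref{eq:nonlinear-system} if and only if it satisfies the integral equation $x_t = x_0 + \int_0^t f(x_s)\,ds$ for all $t\geq 0$; this equivalence follows from the fundamental theorem of calculus and lets me work with merely continuous (rather than differentiable) candidate solutions, on which the forthcoming fixed-point machinery operates cleanly.

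Next I would define the Picard operator $T$ on continuous functions by $(Tx)_t := x_0 + \int_0^t f(x_s)\,ds$, so that fixed points of $T$ are exactly the solutions sought. The heart of the argument is to show that $T$ is a contraction on a suitable complete metric space. Working on a short interval $[0,\delta]$ and invoking the global Lipschitz bound, I would estimate, for any two candidates $x,y$,
\begin{align*}
\|(Tx)_t - (Ty)_t\| \le \int_0^t \|f(x_s)-f(y_s)\|\,ds \le l\,\delta\,\sup_{0\le s\le \delta}\|x_s-y_s\|,
\end{align*}
so that choosing $\delta$ with $l\delta<1$ makes $T$ a contraction in the supremum norm on $C([0,\delta];{\mathbb R}^n)$. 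The contraction mapping principle then delivers a unique fixed point, i.e.\ a unique solution on $[0,\delta]$.

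The remaining step --- and the one I expect to be the main obstacle --- is to upgrade this local solution to a global one on $[0,\infty)$. Here the global (rather than merely local) Lipschitz hypothesis is essential: because $l$ is uniform over all of ${\mathbb R}^n$, the contraction interval length $\delta$ can be fixed once and for all and does not shrink as time advances. I would therefore restart the fixed-point argument on $[\delta,2\delta]$ with initial condition $x_\delta$, then on $[2\delta,3\delta]$, and so on, concatenating the pieces into a solution on $[0,\infty)$, with uniqueness on each subinterval propagating to global uniqueness. Alternatively, I could obtain the global solution in one shot by equipping $C([0,\infty);{\mathbb R}^n)$ with the weighted (Bielecki) norm $\|x\|_\lambda := \sup_{t\ge 0} e^{-\lambda t}\|x_t\|$ and verifying that $T$ contracts with factor $l/\lambda<1$ whenever $\lambda>l$. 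The only point requiring genuine care is ruling out finite-time blow-up: the global Lipschitz condition implies the linear growth bound $\|f(x)\|\le \|f(0)\|+l\|x\|$, whence Gr\"onwall's inequality keeps $\|x_t\|$ finite on every bounded interval, guaranteeing that the iteration scheme never stalls and that the solution genuinely extends to all $t\geq 0$.
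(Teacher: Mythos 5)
The paper gives no proof of this lemma at all---it is quoted directly from the cited reference (Khalil, Thm.~3.2)---so there is no internal argument to compare against; your proof is correct and is essentially the standard textbook argument behind that citation. You reformulate the IVP as the Picard integral equation, obtain a contraction on $C([0,\delta];{\mathbb R}^n)$ once $l\delta<1$, and exploit the decisive feature of the \emph{global} Lipschitz hypothesis: the contraction step $\delta$ depends only on $l$, not on the current state, so the local solutions concatenate to a unique solution on all of $[0,\infty)$. One small tightening: in the concatenation route the Gr\"onwall/linear-growth step is superfluous rather than essential---since $\delta$ is uniform in the initial condition, the solution advances by a fixed increment at every stage, which already rules out finite-time escape; Gr\"onwall would only be needed if you instead argued by extending a maximal solution. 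Your Bielecki-norm alternative is also valid, provided you work in the subspace of continuous functions with $\sup_{t\ge 0} e^{-\lambda t}\|x_t\|<\infty$ and check that the Picard operator maps it into itself, which the linear growth bound $\|f(x)\|\le\|f(0)\|+l\|x\|$ supplies.
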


The equilibrium point is an important concept in nonlinear system theory. In particular, a point, $x_\infty \in {\mathbb R}^n$, in the state-space is said to be an equilibrium point of~\eqref{eq:nonlinear-system} if whenever the state of the system starts at $x_\infty$, it will remain at $x_\infty$~\cite{khalil2002nonlinear}. For~\eqref{eq:nonlinear-system}, the equilibrium points are the real roots of the equation $f(x)=0$. The equilibrium point $x_\infty$ is said to be globally asymptotically stable if for any initial state $x_0 \in {\mathbb R}^n$, $x_t \to x_\infty$ as $t \to \infty$.

\subsection{ODE-based stochastic approximation}\label{sec:ODE-stochastic-approximation}
Due to its generality, the convergence analysis of many RL algorithms rely on the ODE (ordinary differential equation) approach~\cite{bhatnagar2012stochastic,kushner2003stochastic}. It analyzes convergence of general stochastic recursions by examining stability of the associated ODE model based on the fact that the stochastic recursions with diminishing step-sizes approximate the corresponding ODEs in the limit. One of the most popular approaches is based on Borkar and Meyn theorem~\cite{borkar2000ode}. We briefly review Borkar and Meyn's ODE approach, which analyzes convergence of the general stochastic recursions
\begin{align}
&\theta_{k+1}=\theta_k+\alpha_k (f(\theta_k)+\varepsilon_{k+1})\label{eq:general-stochastic-recursion}
\end{align}
where $f:{\mathbb R}^n \to {\mathbb R}^n$ is a nonlinear mapping. Basic technical assumptions are given below.
\begin{assumption}\label{assumption:1}
$\,$\begin{enumerate}
\item The mapping $f:{\mathbb R}^n  \to {\mathbb R}^n$ is
globally Lipschitz continuous, and there exists a function
$f_\infty:{\mathbb R}^n\to {\mathbb R}^n$ such that $\lim_{c\to \infty}\frac{f(c x)}{c}= f_\infty(x),\forall x \in {\mathbb R}^n$.

\item The origin in ${\mathbb R}^n$ is an asymptotically stable
equilibrium for the ODE $\dot \theta_t=f_\infty (\theta_t)$.

\item There exists a unique globally asymptotically stable equilibrium
$\theta_\infty \in {\mathbb R}^n$ for the ODE $\dot \theta_t=f(\theta_t)$, i.e., $\theta_t\to\theta_\infty$ as $t\to\infty$.

\item The sequence $\{\varepsilon_k,{\cal G}_k,k\ge 1\}$ with ${\cal G}_k=\sigma(\theta_i,\varepsilon_i,i\le k)$ is a Martingale difference sequence. In addition, there exists a constant $C_0<\infty $ such that for any initial $\theta_0\in {\mathbb R}^n$, we have ${\mathbb E}[\|\varepsilon_{k+1} \|_2^2 |{\cal G}_k]\le C_0(1+\|\theta_k\|_2^2),\forall k \ge 0$. Here, $\|\cdot \|_2$ denotes the standard Euclidean norm.

\item The step-size satisfies
\begin{align}
&\alpha_k>0,\quad \sum_{k=0}^\infty {\alpha_k}=\infty,\quad \sum_{k=0}^\infty{\alpha_k^2}<\infty.\label{eq:step-size-rule}
\end{align}
\end{enumerate}
\end{assumption}

\begin{lemma}[{\cite[Borkar and Meyn theorem]{borkar2000ode}}]\label{lemma:Borkar}
Suppose that~\cref{assumption:1} holds. Then, the following statements hold true:
\begin{enumerate}
\item For any initial $\theta_0\in
{\mathbb R}^n$, $\sup_{k\ge 0} \|\theta_k\|_2<\infty$
with probability one.

\item In addition, $\theta_k\to\theta_\infty$ as $k\to\infty$ with probability one.
\end{enumerate}
\end{lemma}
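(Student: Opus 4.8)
The plan is to establish the two claims in sequence, since the convergence statement (ii) rests on the boundedness statement (i). First I would fix the interpolated trajectory associated with the recursion~\eqref{eq:general-stochastic-recursion}: setting $t_0=0$ and $t_k=\sum_{i=0}^{k-1}\alpha_i$, define the piecewise-linear continuous curve $\bar\theta_t$ that interpolates the iterates $\theta_k$ at the times $t_k$. The step-size condition $\sum_k \alpha_k=\infty$ guarantees $t_k\to\infty$, so $\bar\theta_t$ is defined on all of $[0,\infty)$, while $\sum_k \alpha_k^2<\infty$ together with the conditional second-moment bound ${\mathbb E}[\|\varepsilon_{k+1}\|_2^2|{\cal G}_k]\le C_0(1+\|\theta_k\|_2^2)$ in~\cref{assumption:1} controls the martingale noise through a maximal inequality. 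The technical backbone for both parts is a Gronwall-type estimate showing that, over any finite time window $[T,T+\tau]$, $\bar\theta_t$ stays uniformly close to the solution of the ODE $\dot\theta=f(\theta)$ started from the matching initial condition, with the error driven entirely by the accumulated noise and the step-size tail, both of which vanish almost surely.

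For the boundedness claim (i), which I expect to be the main obstacle, the idea is a scaling argument that exploits conditions (1) and (2) of~\cref{assumption:1}. The difficulty is that the Gronwall estimate only controls trajectories on bounded sets and over finite horizons, so it cannot by itself preclude escape to infinity. I would introduce the scaled fields $f_c(x):=f(cx)/c$, which by condition (1) converge to $f_\infty$ as $c\to\infty$, together with the correspondingly rescaled (normalized) interpolated process. On each successive time block one shows that the normalized process tracks the flow of the limiting ODE $\dot x=f_\infty(x)$; since the origin is asymptotically stable for that ODE by condition (2), after a fixed horizon $T$ the normalized trajectory is contracted toward the origin by a factor strictly less than one. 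Iterating this contraction across blocks, and absorbing the vanishing noise contributions, yields that the normalized iterates cannot grow without bound, which is precisely $\sup_{k\ge 0}\|\theta_k\|_2<\infty$ with probability one.

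Granting boundedness, claim (ii) follows from the standard ODE method~\cite{borkar2000ode,kushner2003stochastic}. Restricted to the (now almost surely bounded) sample path, the Gronwall estimate upgrades to the statement that $\bar\theta_t$ is an asymptotic pseudotrajectory of the semiflow generated by $\dot\theta=f(\theta)$. A limit-set argument then shows that the set of limit points of $\{\theta_k\}$ is a nonempty, compact, connected, invariant, and internally chain transitive set. Condition (3) of~\cref{assumption:1} supplies a unique globally asymptotically stable equilibrium $\theta_\infty$ for this ODE, and the only internally chain transitive invariant set compatible with such an equilibrium is the singleton $\{\theta_\infty\}$. Hence every limit point equals $\theta_\infty$, giving $\theta_k\to\theta_\infty$ almost surely and completing the proof.
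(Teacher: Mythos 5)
The paper does not prove this lemma at all: it is quoted as the Borkar--Meyn theorem from~\cite{borkar2000ode} and used purely as a black box (see \cref{sec:ODE-stochastic-approximation}), so there is no internal proof to compare against. What you have written is, in substance, a correct reconstruction of the proof in the cited source itself. Your three-stage architecture --- (a) piecewise-linear interpolation of the iterates on the timescale $t_k=\sum_{i<k}\alpha_i$ with a Gronwall-type tracking estimate over finite windows, (b) almost-sure boundedness via the scaled vector fields $f_c(x)=f(cx)/c\to f_\infty(x)$ and block-wise contraction furnished by asymptotic stability of the origin for $\dot x=f_\infty(x)$, and (c) convergence, given boundedness, by identifying the limit set of $\{\theta_k\}$ with an internally chain transitive invariant set of $\dot\theta=f(\theta)$, which must be $\{\theta_\infty\}$ when that equilibrium is unique and globally asymptotically stable --- is precisely how Borkar and Meyn argue (their stability theorem plus the standard ODE-method convergence step; part (c) in the asymptotic-pseudotrajectory language is closer to Bena\"{i}m's formulation, but the two are interchangeable here). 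Two points a full write-up must make precise, which your sketch only gestures at: the contraction factor in step (b) has to be uniform in the scaling parameter $c$, which requires that $f_c\to f_\infty$ uniformly on compact sets (this follows from the global Lipschitz property in \cref{assumption:1}(1)) together with a fixed horizon $T$ after which all solutions of $\dot x=f_\infty(x)$ starting on the unit sphere have norm below a fixed $\rho<1$; and the noise terms accumulated within each block must be controlled \emph{before} boundedness is known, which is exactly where the conditional bound ${\mathbb E}[\|\varepsilon_{k+1}\|_2^2\,|\,{\cal G}_k]\le C_0(1+\|\theta_k\|_2^2)$ of \cref{assumption:1}(4) is needed, since it lets the rescaling absorb the state-dependent part of the noise variance. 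With those caveats, your outline is sound and faithful to the original proof.
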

Borkar and Meyn theorem states that under~\cref{assumption:1}, the stochastic process $(\theta_k)_{k=0}^\infty$ generated by~\eqref{eq:general-stochastic-recursion} is bounded and converges to $\theta_\infty$ with probability one. It will be used to prove convergence of various algorithms throughout the paper.

\subsection{Saddle-point problem}\label{sec:saddle-point-problem}

In this subsection, we briefly review the saddle-point problem~\cite{nedic2009subgradient,qu2018exponential}. Consider a convex-concave function $L: {\mathbb R}^n \times {\mathbb R}^n \to {\mathbb R}$.
\begin{definition}[Saddle-point]
A saddle-point is defined as a pair $(\theta^*,\lambda^*) \in {\mathbb R}^n  \times {\mathbb R}^n$ that satisfies $L(\theta ^* ,\lambda ) \le L(\theta ^* ,\lambda ^* ) \le L(\theta ,\lambda ^* ),\forall (\theta ,\lambda ) \in {\mathbb R}^n  \times {\mathbb R}^n$.
\end{definition}
The saddle point problem is the problem of finding a saddle-point, which arises in a number of areas such as constrained optimization
duality, zero-sum games, and general equilibrium theory~\cite{nedic2009subgradient}.
Moreover, it is also known to be a solution of the min-max problem.
\begin{problem}[Min-max problem]\label{problem:saddle-point}
Find a pair $(\theta^*,\lambda^*) \in {\mathbb R}^n  \times {\mathbb R}^n$ which solves $\min _{\theta  \in {\mathbb R}^n } \max _{\lambda  \in {\mathbb R}^n } L(\theta ,\lambda ) = \max _{\lambda  \in {\mathbb R}^n } \min _{\theta  \in {\mathbb R}^n } L(\theta ,\lambda )$.
\end{problem}

Note that, $(\theta, \lambda)$ is a saddle-point if and only if the stationary point condition holds, i.e., $\nabla_\theta  L(\theta ^* ,\lambda ^* ) = \nabla _\lambda  L(\theta ^* ,\lambda ^* ) = 0$. The so-called primal-dual gradient method~\cite{nedic2009subgradient} is a popular method for solving~\cref{problem:saddle-point}:
\begin{align*}
\theta _{k + 1}  =& \theta _k  - \alpha _k \nabla _\theta  L(\theta_k ,\lambda_k ),\quad \lambda _{k + 1}  =& \lambda _k  + \alpha _k \nabla _\lambda  L(\theta_k ,\lambda_k ),
\end{align*}
where $(\alpha_k)_{k=0}^\infty$ is a step-size. This iteration will be called the discrete-time primal-dual gradient dynamics (PDGD) throughout the paper. Its continuous-time counterpart is
\begin{align*}
\dot \theta_t  =  - \nabla _\theta L(\theta_t ,\lambda_t),\quad \dot \lambda_t  = \nabla _\lambda  L(\theta_t ,\lambda_t ),
\end{align*}
and is called the continuous-time PDGD~\cite{qu2018exponential}. Both PDGDs converge to a saddle-point under some mild assumptions~\cite{nedic2009subgradient,qu2018exponential}.
If the gradients are not accessible, but only their stochastic approximations are available, then the stochastic counterpart is as follows:
\begin{align*}
\theta _{k + 1}  =& \theta _k  - \alpha _k \nabla _\theta  (L(\theta_k ,\lambda_k ) + v_k),\\
\lambda _{k + 1}  =& \lambda _k  + \alpha _k \nabla _\lambda  (L(\theta_k ,\lambda_k) + w_k),
\end{align*}
where $(v_k,w_k) \in {\mathbb R}^n \times {\mathbb R}^n$ is an i.i.d. noise with zero mean. In this paper, it will be called the stochastic PDGD. Stochastic PDGD also converges to a saddle-point in probabilistic senses~\cite{chen2016stochastic,wang2016online}. As an application, let us consider the constrained convex optimization problem.
\begin{problem}\label{problem:0}
Solve for $x \in {\mathbb R}^n$ the optimization
\[
\min_{x \in {\mathbb R}^n } f(x)\quad {\rm s.t.}\quad Ax = b,
\]
where $f$ is convex and continuously differentiable.
\end{problem}

The corresponding (convex-concave) Lagrangian function is defined as
\begin{align}
L(\theta ,\lambda ) = f(\theta ) + \lambda ^T (A\theta  - b),\label{eq:0}
\end{align}
where $\lambda \in {\mathbb R}^n$ is called the Lagrangian multiplier. Using the standard results in convex optimization theory, if~\cref{problem:0} satisfies the Slater's condition~\cite[Chap.~5]{Boyd2004}, then its solution can be found by solving the saddle-point problem in~\cref{problem:saddle-point}.

\section{Review of GTD algorithm}
In this section, we briefly review the gradient temporal difference (GTD) learning developed in~\cite{sutton2009convergent}, which tries to solve the policy evaluation problem. Roughly speaking, the goal of the policy evaluation is to find the weight vector $\theta$ such that $\Phi \theta$ approximates the true value function $J^{\pi}$. This is typically done by minimizing the so-called {\em mean-square Bellman error} loss function~\cite{sutton2009convergent}. The overall problem is summarized below.
\begin{problem}\label{problem:1}
Solve for $\theta \in {\mathbb R}^q$ the optimization
\begin{align*}
&\min_{\theta \in {\mathbb R}^q} {\rm MSBE}(\theta):=\frac{1}{2} \|
R^{\pi}+\gamma P^\pi\Phi \theta-\Phi \theta
\|_{D^{\beta}}^2,
\end{align*}
where $R^\pi \in {\mathbb R}^{|{\mathcal S}|}$ is a vector enumerating all $R^\pi(s), s\in {\mathcal S}$, $q$ is the number of feature functions, $D^{\beta}$ is a diagonal matrix with positive diagonal elements $d^{\beta}(s),s\in {\mathcal S}$, and $\|x\|_D:=\sqrt{x^T Dx}$ for any positive-definite $D$. Here, $d^{\beta}$ can be any state visit distribution under the behavior policy $\beta$ such that $d^{\beta}(s)>0,\forall s\in {\mathcal S}$.
\end{problem}
The GTD in~\cite{sutton2009fast} considers another objective function called the {\em mean-square projected Bellman error} loss function.
\begin{problem}\label{problem:2}
Solve for $\theta \in {\mathbb R}^q$ the optimization
\begin{align*}
&\min_{\theta\in {\mathbb R}^q} {\rm MSPBE}(\theta):= \frac{1}{2}\|
\Pi (R^{\pi} + \gamma P^{\pi} \Phi \theta)-\Phi \theta \|_{D^{\beta}}^2.
\end{align*}
\end{problem}
where $\Pi$ is the projection onto the range space of $\Phi$,
denoted by $R(\Phi)$: $\Pi(x):=\argmin_{x'\in R(\Phi)}
\|x-x'\|_{D^{\beta}}^2$. The projection can be performed by the matrix
multiplication: we write $\Pi(x):=\Pi x$, where $\Pi:=\Phi(\Phi^T
D^{\beta} \Phi)^{-1}\Phi^T D^{\beta}$. Note that minimizing the objective means minimizing the error of the projected Bellman equation $\Phi \theta  = \Pi (R^\pi   + \gamma P^\pi  \Phi \theta )$ with respect to $ \|\cdot\|_{D^\beta}$. Moreover, note that in the objective of~\cref{problem:2}, $d^{\beta}$ depends on the behavior policy, $\beta$, while $P^{\pi}$ and $R^{\pi}$ depend on the target policy, $\pi$, that we want to evaluate. This structure allows us to obtain an off-policy learning algorithm through the importance sampling~\cite{precup2001off} or sub-sampling techniques~\cite{sutton2009convergent}. Throughout the paper, we adopt the following standard assumption.
\begin{assumption}\label{assumption:2}
$\Phi ^T D^{\beta} (\gamma P^\pi   - I)\Phi $ is nonsingular, where $I$ denotes the identity matrix with an appropriate dimension.
\end{assumption}
Note that~\cref{assumption:2} is common in the literature, and is adopted in~\cite{sutton2009convergent,sutton2009fast,ghiassian2020gradient} for convergence of GTD algorithms. Some properties related to~\cref{problem:2} are summarized below for convenience and completeness.
\begin{lemma}\label{lemma:2}
The following statements hold true:
\begin{enumerate}
\item A solution of~\cref{problem:2} exists, and is unique.

\item The solution of~\cref{problem:2} is given by
\begin{align}
\theta ^*:=-(\Phi ^T D^{\beta} (\gamma P^\pi   - I)\Phi )^{ - 1} \Phi ^T D^{\beta} R^\pi.\label{eq:theta-star}
\end{align}
\end{enumerate}
\end{lemma}
\begin{proof}
For the first statement, the equation, $\Pi (R^\pi   + \gamma P^\pi  \Phi \theta ) - \Phi \theta  = 0$, can be equivalently written as $\Pi (R^\pi   + \gamma P^\pi  \Phi \theta ) - \Phi \theta  = 0 \Leftrightarrow \Phi (\Phi ^T D^\beta  \Phi )^{ - 1} \Phi ^T D^\beta  (R^\pi   + \gamma P^\pi  \Phi \theta ) - \Phi \theta  = 0\Leftrightarrow \Phi ^T D^\beta  (R^\pi   + \gamma P^\pi  \Phi \theta ) - \Phi ^T D^\beta  \Phi \theta  = 0 \Leftrightarrow \Phi ^T D^\beta  (\gamma P^\pi   - I)\Phi \theta  =  - \Phi ^T D^\beta  R^\pi$. Since $\Phi ^T D^\beta  (\gamma P^\pi   - I)\Phi$ is nonsingular by~\cref{assumption:2}, the last equation admits a unique solution. Moreover, the second statement is directly proved from the last equation. This completes the proof.
\end{proof}

Based on this objective function,~\cite{sutton2009fast} developed GTD2. The reader is referred to~\cite{sutton2009fast} for more details. After~\cite{sutton2009fast}, some different interpretations were developed based on saddle-point perspectives. Before proceeding, they are briefly presented in the following subsections.

\subsection{First approach: dual representatoin}
A saddle-point perspective of GTD2 was introduced in~\cite{macua2015distributed}.
The main idea is to convert~\cref{problem:2} into the equivalent quadratic constrained optimization problem
\begin{align*}
&\mathop {\min}\limits_{\theta,w  \in {\mathbb R}^q } \frac{1}{2}w^T (\Phi ^T D^\beta \Phi )^{ - 1} w\\
&{\rm s.t.}\quad w = \Phi ^T D^\beta (R^\pi   + \gamma P^\pi  \Phi \theta  - \Phi \theta )
\end{align*}
where $w \in {\mathbb R}^q$ is a newly introduced vector variable. Introducing the Lagrangian function $L(\theta ,w,\lambda ) = \frac{1}{2}w^T (\Phi ^T D^\beta \Phi )^{ - 1} w + \lambda ^T (\Phi ^T D^\beta (R^\pi   + \gamma P^\pi  \Phi \theta  - \Phi \theta)-w)$, where $\lambda\in {\mathbb R}^q$ is the Lagrangian multiplier, the dual problem~\cite{Boyd2004} is
\begin{align*}
&\min_{\lambda  \in \in {\mathbb R}^q} \frac{1}{2}\lambda ^T \Phi ^T D^\beta  \Phi \lambda  - \lambda ^T \Phi ^T D^\beta  R^\pi\\
&{\rm s.t.}\quad 0 = \lambda ^T \Phi ^T D^\beta (\gamma P^\pi  \Phi  - \Phi )
\end{align*}

The main reason to consider the dual problem instead of the primal problem is that the dual formulation removes the matrix inverse in the objective. Next, we can again construct the corresponding Lagrangian function for the dual problem as follow:
\[
L(\theta ,\lambda ) = \frac{1}{2}\lambda ^T \Phi ^T D^\beta  \Phi \lambda  - \lambda ^T \Phi ^T D^\beta  R^\pi   + \lambda ^T \Phi ^T D^\beta (\gamma P^\pi  \Phi  - \Phi )\theta
\]
where $\theta\in {\mathbb R}^q$ is the Lagrangian multiplier. Then, it turned out that GTD2 is identical to a stochastic PDGD for solving the saddle-point problem, $\min_{\lambda  \in {\mathbb R}^q } \max_{\theta  \in {\mathbb R}^q } L(\theta ,\lambda )$. For more details, the reader is referred to~\cite{macua2015distributed}.

\subsection{Second approach: Fenchel duality}
GTD2 can be also interpreted in a different direction using the Fenchel dual to~\cref{problem:2} as shown in~\cite{dai2018sbeed}. In particular, using the Fenchel duality, the conjugate form of ${\rm MSPBE}(\theta)$:
\begin{align*}
{\rm{MSPBE}}(\theta ) =& \frac{1}{2} \| {\Pi (R^\pi   + \gamma P^\pi  \Phi \theta ) - \Phi \theta } \|_{D^\beta  }^2\\
=& \frac{1}{2}\| {\Phi ^T D(R^\pi   + \gamma P^\pi  \Phi \theta  - \Phi \theta )} \|_{(\Phi ^T D^\beta \Phi )^{ - 1} }^2
\end{align*}
is given by
\begin{align*}
{\rm{MSPBE}}(\theta ) =& \max_{\lambda \in {\mathbb R}^q}  L(\theta ,\lambda )\\
:=& \lambda ^T \Phi ^T D^\beta (R^\pi   + \gamma P^\pi  \Phi \theta  - \Phi \theta ) - \frac{1}{2}\lambda ^T \Phi ^T D^\beta \Phi \lambda.
\end{align*}

Therefore,~\cref{problem:2} can be represented by the convex-concave saddle-point problem, $\min_{\theta  \in {\mathbb R}^q } {\rm MSPBE}(\theta ) = \min _{\theta  \in {\mathbb R}^q } \max _{\lambda  \in {\mathbb R}^q } L(\theta ,\lambda )$. Then, GTD2 is identical to a stochastic primal-dual algorithm for solving the above saddle-point problem. In the next section, we introduce an alternative saddle-point approach to derive GTD2 from a different angle.

\section{Third approach}
In this section, we introduce a slightly different approach to derive GTD2. To this end, let us consider the following constrained optimization problem.
\begin{problem}\label{problem:3}
Solve for $\theta  \in {\mathbb R}^q$ the optimization
\begin{align}
&\min_{\theta  \in {\mathbb R}^q } 0\quad {\rm s.t.}\quad 0 = \Phi ^T D^{\beta} (R^\pi   + \gamma P^\pi  \Phi \theta  - \Phi \theta ).\label{eq:2}
\end{align}
\end{problem}

Note that in~\cref{problem:3}, we introduce a null objective, $f\equiv 0$, to fit the problem into an optimization form. \cref{problem:3} can be seen as the projected Bellman equation $\Phi \theta  = \Pi (R^\pi   + \gamma P^\pi  \Phi \theta )$ in the form of an optimization problem. We can prove that the optimization admits a unique solution, which is identical to the solution of~\cref{problem:2}.
\begin{proposition}
A solution of~\cref{problem:3} exists and is unique given by $\theta^*$ defined in~\eqref{eq:theta-star}.
\end{proposition}
\begin{proof}
The equality constraint in~\eqref{eq:2} can be equivalently written as $0 = \Phi ^T D^\beta  (R^\pi   + \gamma P^\pi  \Phi \theta  - \Phi \theta ) \Leftrightarrow \Phi ^T D^\beta  (I - \gamma P^\pi  )\Phi \theta  = \Phi ^T D^\beta  R^\pi$, whose solution is identical to $\theta^*$ in~\eqref{eq:theta-star}. This completes the proof.
\end{proof}
To formulate~\cref{problem:3} into a min-max saddle-point problem, we introduce the corresponding Lagrangian function $L(\theta ,\lambda ) := \lambda ^T \Phi ^T D^{\beta} (R^\pi   + \gamma P^\pi  \Phi \theta  - \Phi \theta )$. Instead of directly deriving the corresponding dual problem, we introduce a regularization term to make it strongly concave in $\lambda$, and obtain the following modification:
\begin{align}
L(\theta ,\lambda ): = \lambda ^T \Phi ^T D^{\beta} (R^\pi   + \gamma P^\pi  \Phi \theta  - \Phi \theta ) - \frac{1}{2}\lambda ^T \Phi ^T D^{\beta} \Phi \lambda. \label{eq:Lagrangian1}
\end{align}

The corresponding saddle-point problem of~\eqref{eq:Lagrangian1} is then given as follows.
\begin{problem}\label{problem:4}
Solve for $(\theta ,\lambda )\in {\mathbb R}^q \times {\mathbb R}^q$ the min-max problem
\begin{align*}
\min_{\theta  \in {\mathbb R}^q } \max_{\lambda  \in {\mathbb R}^q } L(\theta ,\lambda ): =& \lambda ^T \Phi ^T D^{\beta} (R^\pi   + \gamma P^\pi  \Phi \theta  - \Phi \theta )\\
& - \frac{1}{2}\lambda ^T \Phi ^T D^{\beta} \Phi \lambda
\end{align*}
\end{problem}

Note that a quadratic penalty term (or regularization
term) has been added to the original Lagrangian function in~\eqref{eq:Lagrangian1}, which is not typical in terms of the standard
Lagrangian duality theory. In this sense,~\cref{problem:4} and~\cref{problem:3} are not equivalent. This additional term is introduced in order to derive GTD2 using the saddle-point viewpoints, and this process can give additional insights on GTD2.
Since~\cref{problem:4} is modified, a natural question is if the original equality constrained optimization in~\cref{problem:3} can be solved by addressing the saddle-point problem in~\cref{problem:4} for the regularized Lagrangian function~\eqref{eq:Lagrangian1}. We can conclude that the solutions of~\cref{problem:4} is indeed identical to those of~\cref{problem:2}.
\begin{proposition}\label{prop:2}
A solution of~\cref{problem:4} exists, is unique, and is given by $\theta = \theta^*$ and $\lambda = 0$.
\end{proposition}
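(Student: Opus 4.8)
The plan is to exploit the characterization recalled in~\cref{sec:saddle-point-problem}: for a convex-concave $L$, a pair $(\theta,\lambda)$ is a saddle point if and only if it is a stationary point, i.e.\ $\nabla_\theta L(\theta,\lambda) = \nabla_\lambda L(\theta,\lambda) = 0$. So I would first check that the $L$ in~\eqref{eq:Lagrangian1} is genuinely convex-concave. It is affine in $\theta$ (hence both convex and concave in $\theta$), and as a function of $\lambda$ it is $-\frac{1}{2}\lambda^T \Phi^T D^{\beta}\Phi\lambda$ plus a linear term; since $\Phi$ has full column rank and $D^{\beta}$ is diagonal with positive entries, $\Phi^T D^{\beta}\Phi$ is positive definite, so $L(\theta,\cdot)$ is strictly concave. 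Thus the stationarity equations characterize the saddle points exactly, and it suffices to solve them.

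Next I would write out the two gradients. Setting $M := \Phi^T D^{\beta}(\gamma P^\pi - I)\Phi$, the $\theta$-gradient is $\nabla_\theta L = M^T\lambda$, and the $\lambda$-gradient is $\nabla_\lambda L = \Phi^T D^{\beta}(R^\pi + \gamma P^\pi \Phi\theta - \Phi\theta) - \Phi^T D^{\beta}\Phi\lambda$. The first key step is the equation $\nabla_\theta L = 0$, i.e.\ $M^T\lambda = 0$: by~\cref{lemma:2}(2), $M$ is nonsingular, hence so is $M^T$, which forces $\lambda = 0$.

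Finally, substituting $\lambda = 0$ into $\nabla_\lambda L = 0$ collapses it to $\Phi^T D^{\beta}(R^\pi + \gamma P^\pi \Phi\theta - \Phi\theta) = 0$, which is exactly the equality constraint of~\cref{problem:3}. Its unique solution is $\theta^*$, as established in the preceding proposition (equivalently by~\cref{lemma:2}(3)). Hence the stationarity system has the single solution $(\theta,\lambda) = (\theta^*,0)$; exhibiting it gives existence, and because every saddle point must satisfy the stationarity equations, this solution is unique. I do not anticipate a serious obstacle here: the only substantive ingredient is the nonsingularity of $M$, already delivered by~\cref{lemma:2}, while the remaining work is routine differentiation. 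The one point warranting care is the justification that stationary points coincide with saddle points, which is why I would verify the convex-concave property (in particular the strict concavity in $\lambda$) at the outset rather than take it for granted.
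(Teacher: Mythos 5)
Your proof is correct and follows essentially the same route as the paper's: use the stationarity conditions $\nabla_\theta L = \nabla_\lambda L = 0$, invoke the nonsingularity of $\Phi^T D^{\beta}(\gamma P^\pi - I)\Phi$ from~\cref{lemma:2} to force $\lambda = 0$, then recover $\theta = \theta^*$ from the remaining equation. The only difference is that you explicitly verify the convex-concave structure (strict concavity in $\lambda$ via positive definiteness of $\Phi^T D^{\beta}\Phi$) to justify that stationary points coincide with saddle points, a step the paper leaves implicit by appealing to the characterization stated in its preliminaries.
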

\begin{proof}
Since $\Phi ^T D^\beta (\gamma P^\pi   - I)\Phi$ is nonsingular from~\cref{assumption:2}, $\nabla _\theta  L(\theta ,\lambda ) = (\gamma P^\pi  \Phi  - \Phi )^T D^{\beta} \Phi \lambda = 0$ implies $\lambda = 0$. On the other hand, $\nabla _\lambda  L(\theta ,\lambda ) = \Phi ^T D^{\beta} (R^\pi   + \gamma P^\pi  \Phi \theta  - \Phi \theta  - \Phi \lambda ) = 0$ with $\lambda = 0$ leads to the desired conclusion.
\end{proof}
Intuitively, the additional regularization term in~\eqref{eq:Lagrangian1} penalizes the Lagrangian multiplier $\lambda$ from being large, while this change does not affect the primal variable $\theta$. Now, let us turn our attention to its continuous-time PDGD~\cite{qu2018exponential}:
\begin{align}
\dot \theta _t  =&  - \nabla_\theta L(\theta_t ,\lambda _t ) =  - (\gamma P^\pi  \Phi  - \Phi )^T D^\beta  \Phi \lambda _t\nonumber\\
\dot \lambda _t  =& \nabla_\lambda L(\theta,\lambda _t ) = \Phi ^T D^\beta  (R^\pi   + \gamma P^\pi  \Phi \theta _t  - \Phi \theta _t  - \Phi \lambda _t )\label{eq:7}
\end{align}
Considering $s_k \sim d^{\beta}$, $a_k \sim \pi(\cdot|s_k)$, and $s_k'\sim P(\cdot |s_k,a_k)$, the corresponding stochastic PDGD can be obtained as follows:
\begin{align*}
\theta _{k + 1}  =& \theta _k  - \alpha _k (\gamma e_{s_k } e_{s_k '}^T \Phi  - \Phi )^T e_{s_k } e_{s_k }^T \Phi \lambda _k,\\
\lambda _{k + 1}  =& \lambda _k  + \alpha _k \Phi ^T e_{s_k } e_{s_k }^T (e_{s_k } r(s_k ,a_k ,s_k ')\\
& + \gamma e_{s_k } e_{s_k '}^T \Phi \theta _k  - \Phi \theta _k  - \Phi \lambda _k ).
\end{align*}
This recursion is identical to GTD2, which is summarized in~\cref{algo:GTD2} for completeness of presentations.
\begin{algorithm}[h]
\caption{GTD2}
\begin{algorithmic}[1]

\State Set the step-size $(\alpha_k)_{k=0}^\infty$ and initialize $(\theta _0,\lambda_0 )$.

\For{$k \in \{0,\ldots\}$}

\State Observe $s_k \sim d^{\beta}$, $a_k \sim \beta(\cdot|s_k)$, and $s_k'\sim P(\cdot |s_k,a_k)$, $r_k :=r(s_k,a_k,s_k')$.

\State Update parameters according to
\begin{align*}
&\theta_{k+1}=\theta_k + \alpha_k (\phi_k-\gamma \rho_k \phi_k')(\phi_k^T \lambda_k),\\
&\lambda_{k+1}=\lambda_k +\alpha_k (\delta_k -\phi_k^T \lambda_k)\phi_k,
\end{align*}
where $\phi_k:=\phi(s_k),\phi_{k}':=\phi(s_{k}')$, $\rho _k : = \frac{{\pi (a_k |s_k )}}{{\beta (a_k |s_k )}}$, and $\delta_k =\rho _k r_k +\gamma \rho _k (\phi_{k}')^T \theta_k -\phi_k^T \theta_k$.

\EndFor
\end{algorithmic}
\label{algo:GTD2}
\end{algorithm}

Note that in~\cref{algo:GTD2}, an importance sampling ratio, $\rho _k : = \pi (a_k |s_k ) /\beta (a_k |s_k )$, is introduced for off-policy learning~\cite{precup2001off}.

Although the convergence of GTD2 was given in~\cite{sutton2009fast}, we will provide another approach based on recent results in~\cite{qu2018exponential} in the next section.
\begin{remark}
A different algorithm can be obtained with the Lagrangian function $L(\theta ,\lambda ): = \lambda ^T \Phi ^T D^{\beta} (R^\pi   + \gamma P^\pi  \Phi \theta  - \Phi \theta ) - \frac{1}{2}\lambda ^T \lambda$, which may have different convergence properties. In general, the corresponding algorithm performs better with smaller step-sizes, while in general, GTD2 converges faster.
\end{remark}

\section{Convergence of GTD2}
In this section, we will provide an alternative approach to the convergence of GTD2 based on the recent results in~\cite{qu2018exponential} in combination with the constrained optimization perspective of GTD2 in the previous section. Before proceeding, some results of~\cite{qu2018exponential} are briefly summarized.
\begin{lemma}[Thm.~1, \cite{qu2018exponential}]\label{lemma:1}
Consider the equality constrained optimization in~\cref{problem:0}, and suppose that $f$ is twice differentiable, $\mu$-strongly convex, and $l$-smooth, i..e, for all $x,y \in {\mathbb R}^n$, $\mu \left\| {x - y} \right\|_2^2  \le (\nabla f(x) - \nabla f(y))^T (x-y) \le l\left\| {x - y} \right\|_2^2$. Moreover, suppose that $A$ is full row rank. Consider the corresponding Lagrangian function~\eqref{eq:0}. Then, the corresponding saddle-point $(\theta^*,\lambda^*)$ is unique, and the corresponding continuous-time PDGD,
\begin{align*}
\dot x_t =&  - \nabla _\theta  L(\theta_t ,\lambda_t ) =  - \nabla _\theta  f(\theta_t ) - A^T \lambda_t\\
\dot \lambda_t =& \nabla _\lambda  L(\theta_t ,\lambda_t ) = A\theta_t  - b,
\end{align*}
exponentially converges to $(\theta^*,\lambda^*)$.
\end{lemma}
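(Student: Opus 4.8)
The plan is to first pin down the saddle point from the first-order conditions and then to establish the exponential rate through a Lyapunov argument tailored to the gradient structure. First I would characterize the equilibrium: the saddle point $(\theta^*,\lambda^*)$ solves $\nabla_\theta L(\theta^*,\lambda^*)=\nabla f(\theta^*)+A^T\lambda^*=0$ and $\nabla_\lambda L(\theta^*,\lambda^*)=A\theta^*-b=0$. Since $A$ is full row rank the constraint $A\theta=b$ is consistent, and $\mu$-strong convexity of $f$ makes the constrained minimizer $\theta^*$ unique; this fixes $\nabla f(\theta^*)$, and because $A$ full row rank means $A^T$ is injective, the relation $A^T\lambda^*=-\nabla f(\theta^*)$ determines $\lambda^*$ uniquely as well. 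This gives the uniqueness claim.

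Next I would pass to error coordinates $\tilde\theta:=\theta_t-\theta^*$, $\tilde\lambda:=\lambda_t-\lambda^*$ and write $g_t:=\nabla f(\theta_t)-\nabla f(\theta^*)$, so that the PDGD becomes
\begin{align*}
\dot{\tilde\theta}=-g_t-A^T\tilde\lambda,\qquad \dot{\tilde\lambda}=A\tilde\theta.
\end{align*}
The naive candidate $V_0:=\tfrac12\|\tilde\theta\|_2^2+\tfrac12\|\tilde\lambda\|_2^2$ yields $\dot V_0=-\tilde\theta^T g_t\le -\mu\|\tilde\theta\|_2^2$, since the off-diagonal terms $-\tilde\theta^T A^T\tilde\lambda+\tilde\lambda^T A\tilde\theta$ cancel. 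This certifies stability but provides no decay in the dual variable, so it cannot alone deliver an exponential rate.

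To couple the two components I would augment the Lyapunov function with a small cross term, $V:=V_0+\epsilon\,\tilde\theta^T A^T\tilde\lambda$ for $\epsilon>0$ small; for $\epsilon$ small enough the cross term is dominated by $V_0$, so $V$ is positive definite and equivalent to $\|\tilde\theta\|_2^2+\|\tilde\lambda\|_2^2$. Differentiating along the flow gives
\begin{align*}
\dot V=-\tilde\theta^T g_t+\epsilon\left(-g_t^T A^T\tilde\lambda-\tilde\lambda^T AA^T\tilde\lambda+\tilde\theta^T A^T A\tilde\theta\right).
\end{align*}
The decisive structural fact is that $AA^T\succ 0$ because $A$ is full row rank, so $-\epsilon\,\tilde\lambda^T AA^T\tilde\lambda\le-\epsilon\,\sigma_{\min}(AA^T)\|\tilde\lambda\|_2^2$ injects negative feedback into the dual error at order $\epsilon$. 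I would then bound the indefinite term $-\epsilon\,g_t^T A^T\tilde\lambda$ by Young's inequality, estimate $\epsilon\,\tilde\theta^T A^T A\tilde\theta\le\epsilon\|A\|_2^2\|\tilde\theta\|_2^2$, and invoke $\tilde\theta^T g_t\ge\mu\|\tilde\theta\|_2^2$ together with the smoothness bound $\|g_t\|_2\le l\|\tilde\theta\|_2$.

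The hard part will be the constant bookkeeping: $\epsilon$ must be chosen small enough that simultaneously (i) $V$ remains positive definite, (ii) the extra primal cost $\epsilon\|A\|_2^2\|\tilde\theta\|_2^2$ and the Young's-inequality residual of $-\epsilon\,g_t^T A^T\tilde\lambda$ are absorbed by $-\mu\|\tilde\theta\|_2^2$, and (iii) the remaining Young's-inequality mass on the dual side is absorbed by $-\epsilon\,\sigma_{\min}(AA^T)\|\tilde\lambda\|_2^2$. Once such an $\epsilon$ is fixed, these estimates combine into $\dot V\le-c_1\|\tilde\theta\|_2^2-c_2\|\tilde\lambda\|_2^2\le-c\,V$ for positive constants $c_1,c_2,c$, and Gr\"onwall's inequality gives $V(t)\le V(0)e^{-ct}$, hence exponential convergence of $(\theta_t,\lambda_t)$ to $(\theta^*,\lambda^*)$.
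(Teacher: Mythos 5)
This lemma is never proved in the paper: it is imported verbatim from~\cite{qu2018exponential} (Theorem~1 there), so there is no in-paper argument to compare against. Your proposal is, in effect, a self-contained reconstruction of that cited result, and its mechanics are sound: the error dynamics, the cancellation of the skew terms in $\dot V_0$, the cross-term augmentation $V = V_0 + \epsilon\,\tilde\theta^T A^T \tilde\lambda$, and the use of $AA^T \succ 0$ (full row rank) to inject damping into the dual error are exactly the ingredients that produce the exponential rate; the constant bookkeeping you defer does go through (take the Young parameter equal to $\sigma_{\min}(AA^T)$ so that half of the dual damping absorbs the cross residual, then shrink $\epsilon$ until the primal residuals are dominated by $\mu$ and $V$ stays positive definite). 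Note, however, that a quadratic Lyapunov function with a primal--dual cross term is precisely the technique of the cited reference, so you have essentially re-derived its proof rather than found a genuinely different route. Two points to tighten: (i) for $\lambda^*$ you argue only uniqueness via injectivity of $A^T$; you should first establish existence, which follows because first-order optimality of $\theta^*$ over the affine set $\{ \theta : A\theta = b\}$ gives $\nabla f(\theta^*) \perp \ker A$, i.e.\ $\nabla f(\theta^*) \in \operatorname{range}(A^T)$; (ii) the bound $\|g_t\|_2 \le l\|\tilde\theta\|_2$ does not follow verbatim from the inner-product form of $l$-smoothness stated in the lemma --- justify it either from twice differentiability ($\nabla^2 f \preceq l I$ pointwise, then integrate along the segment) or from the standard convex-analysis equivalence of the two notions of smoothness for convex $f$.
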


In the sequel, we will apply~\cref{lemma:1} to prove the convergence of GTD2, especially, for the global asymptotic stability of its ODE model.
The main difficulty in applying~\cref{lemma:1} to~\cref{problem:3} is that~\cref{problem:3} has a null objective, $f\equiv 0$, which does not satisfy the strong convexity assumption of the objective function in~\cref{lemma:1}. To resolve this problem, we will consider the dual problem of~\cref{problem:3} instead of its original form.
\begin{problem}[Dual problem]\label{problem:7}
Solve for $\lambda\in {\mathbb R}^q$ the optimization
\begin{align*}
&\max_{\lambda\in {\mathbb R}^q }  \lambda ^T \Phi ^T D^{\beta} R^\pi   - \frac{1}{2}\lambda ^T \Phi ^T D^{\beta} \Phi \lambda\\
&{\rm s.t.}\quad \Phi ^T (\gamma P^\pi   - I)^T D^{\beta} \Phi \lambda  = 0
\end{align*}
\end{problem}
\begin{proposition}
\cref{problem:7} is the dual problem of~\cref{problem:3}.
\end{proposition}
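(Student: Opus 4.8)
The plan is to establish the claim by forming the Lagrange dual of \cref{problem:3} directly and showing that partial minimization over the primal variable $\theta$ reproduces exactly the objective and the constraint of \cref{problem:7}. First I would take as the Lagrangian of \cref{problem:3} the regularized form in~\eqref{eq:Lagrangian1}, in which the quadratic term $-\tfrac{1}{2}\lambda^T \Phi^T D^{\beta} \Phi \lambda$ is the strong-concavity regularizer already introduced; since this term involves only $\lambda$, it will simply pass through the minimization over $\theta$ and end up shaping the dual objective. The dual problem is then $\max_{\lambda}g(\lambda)$ with dual function $g(\lambda):=\inf_{\theta\in{\mathbb R}^q}L(\theta,\lambda)$, and the whole argument reduces to computing this infimum.

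The key structural observation is that $L(\theta,\lambda)$ in~\eqref{eq:Lagrangian1} is \emph{affine} in $\theta$. Collecting the $\theta$-dependent part and using that $D^{\beta}$ is diagonal (hence symmetric), I would rewrite
\[
L(\theta ,\lambda ) = \lambda ^T \Phi ^T D^{\beta} R^\pi - \tfrac{1}{2}\lambda ^T \Phi ^T D^{\beta} \Phi \lambda + \bigl[\Phi ^T (\gamma P^\pi - I)^T D^{\beta} \Phi \lambda\bigr]^T \theta .
\]
The infimum over all of ${\mathbb R}^q$ of an affine function is $-\infty$ unless its linear coefficient vanishes. Hence $g(\lambda)$ is finite precisely on the set $\{\lambda : \Phi ^T (\gamma P^\pi - I)^T D^{\beta} \Phi \lambda = 0\}$, and on that set the linear term drops out, leaving $g(\lambda) = \lambda ^T \Phi ^T D^{\beta} R^\pi - \tfrac{1}{2}\lambda ^T \Phi ^T D^{\beta} \Phi \lambda$.

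Finally I would assemble the dual problem $\max_{\lambda}g(\lambda)$. Promoting the implicit finiteness condition (the effective domain of $g$) to an explicit equality constraint converts $\max_{\lambda}g(\lambda)$ into exactly the constrained maximization of \cref{problem:7}, which completes the proof. The step I expect to be the main, though modest, obstacle is the bookkeeping around the extended-real-valued dual function: one must argue cleanly that the effective domain of $g$ is the null space of $\Phi ^T (\gamma P^\pi - I)^T D^{\beta} \Phi$ and that maximizing over this domain is equivalent to the constrained problem, rather than glossing over the $-\infty$ branch. A secondary point to keep honest is the transpose bookkeeping when rewriting the bilinear term $\lambda ^T \Phi ^T D^{\beta} (\gamma P^\pi - I)\Phi \theta$ as a linear functional of $\theta$, together with the remark that it is precisely the inclusion of the regularizer that renders the dual objective strongly concave and thus amenable to \cref{lemma:1} in the subsequent convergence analysis.
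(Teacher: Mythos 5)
Your proposal is correct and follows essentially the same route as the paper: both start from the regularized Lagrangian in~\eqref{eq:Lagrangian1}, exploit its affinity in $\theta$ to conclude that $\inf_{\theta} L(\theta,\lambda)$ is finite exactly when the coefficient $\Phi^T(\gamma P^\pi - I)^T D^{\beta}\Phi\lambda$ vanishes, and then read off the constrained maximization of \cref{problem:7}. Your treatment is somewhat more careful than the paper's (which simply asserts finiteness "when" the coefficient vanishes, without spelling out the $-\infty$ branch or the promotion of the effective domain to an explicit constraint), but the underlying argument is identical.
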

\begin{proof}
Consider the Lagrangian function~\eqref{eq:Lagrangian1}, and the corresponding min-max problem in~\cref{problem:4}. The Lagrangian function can be written by $L(\theta ,\lambda ) = \lambda ^T \Phi ^T D^{\beta} R^\pi - \frac{1}{2}\lambda ^T \Phi ^T D^{\beta} \Phi \lambda  + (\lambda ^T \Phi ^T D^{\beta} \gamma P^\pi  \Phi  - \lambda ^T \Phi ^T D^{\beta} \Phi )\theta$. If we fix $\lambda$, then the problem $\min_{\theta  \in {\mathbb R}^q } L(\theta ,\lambda )$ has a finite optimal
value, when $\lambda ^T \Phi ^T D^{\beta} \gamma P^\pi  \Phi  - \lambda ^T \Phi ^T D^{\beta} \Phi  = 0$. Therefore, the dual problem $\max_{\lambda  \in {\mathbb R}^q } \min_{\theta\in {\mathbb R}^q } L(\theta ,\lambda )$ is~\cref{problem:7}. This completes the proof.
\end{proof}

Now,~\cref{problem:7} (maximization problem) can be equivalently written as the minimization problem
\begin{align}
&\min_{\lambda \in {\mathbb R}^q}  \frac{1}{2}\lambda ^T \Phi ^T D^{\beta} \Phi \lambda - \lambda ^T \Phi ^T D^{\beta}  R^\pi\label{eq:modified-dual}\\
&{\rm{s}}{\rm{.t}}{\rm{.}}\quad \lambda ^T \Phi ^T D^{\beta} \Phi  - \lambda ^T \Phi ^T D^{\beta} \gamma P^\pi  \Phi  = 0\nonumber
\end{align}
The corresponding Lagrangian function is
\begin{align*}
&L(\theta ,\lambda ) = \frac{1}{2}\lambda ^T \Phi ^T D^{\beta} \Phi \lambda\\
& - \lambda ^T \Phi ^T D^{\beta} R^\pi   + \lambda ^T \Phi ^T D^{\beta} \Phi \theta  - \lambda ^T \Phi ^T D^{\beta} \gamma P^\pi  \Phi \theta,
\end{align*}
and the corresponding continuous-time PDGD is
\begin{align*}
\dot \lambda_t  =&  - \nabla _\lambda  L(\theta_t ,\lambda_t ) =  - \Phi ^T D^{\beta} (\gamma P^\pi  \Phi \theta _t - \Phi \theta_t  - R^\pi   + \Phi \lambda_t),\\
\dot \theta_t  =& \nabla _\theta  L(\theta_t ,\lambda_t ) = (\Phi  - \gamma P^\pi  \Phi )^T D^{\beta} \Phi \lambda_t.
\end{align*}

We can easily check that the PDGD is identical to that of~\cref{problem:3}, given in~\eqref{eq:7}.
Therefore,~\cref{lemma:1} can be applied to~\cref{problem:3} in place of~\cref{problem:7}.
\begin{proposition}\label{prop:1}
Consider the trajectory $(\theta_t, \lambda_t)$ of the PDGD in~\eqref{eq:7}. Then, $(\theta_t, \lambda_t)\to (\theta^*, 0)$ exponentially as $t\to \infty$.
\end{proposition}
\begin{proof}
Note that~\eqref{eq:modified-dual} has a strongly convex, smooth, and twice differentiable objective function. Moreover, $\Phi ^T (I - \gamma P^\pi  )^T D^\beta \Phi$ is nonsingular by~\cref{assumption:2}, and hence is full row rank. The other assumptions are also met. Therefore, we can apply~\cref{lemma:1} to obtain the desired conclusion.
\end{proof}

Now, we can easily apply Borkar and Meyn theorem with~\cref{prop:1} to complete the proof. Details of the remaining parts can be found in~\cite{sutton2009fast}.
\begin{lemma}[Thm.~1, \cite{sutton2009fast}]\label{thm:convergence1}
Consider~\cref{algo:GTD2}, and assume that the step-size satisfy~\eqref{eq:step-size-rule}. Then, $\theta_k \to \theta^*$ and $\lambda_k \to 0$ as $k \to \infty$ with probability one.
\end{lemma}
\begin{remark}
\cref{lemma:1} provides an exponential convergence, while~\cref{thm:convergence1} provides an asymptotic convergence. The convergence result in this paper relies on the standard ODE methods (Borkar and Meyn theorem), which do not provide convergence rates in
general even if the corresponding O.D.E model's solution converges exponentially fast. Convergence rate analysis can be a potential future topic.
\end{remark}

In this section, we proposed a saddle-point interpretation of GTD2 from a slightly different perspective, and presented a different analysis for the global stability of the corresponding ODE model. Starting from this new perspective, we will provide two new versions of GTD in the next section.

\section{GTD3}

In this section, we propose a new optimization formulation for the policy evaluation problem,~\cref{problem:2}. Based on this form, we will derive another version of GTD, called GTD3 in this paper.
\begin{problem}\label{problem:5}
Solve for $\theta  \in {\mathbb R}^q$ the optimization
\begin{align*}
&\min_{\theta  \in {\mathbb R}^q } \frac{1}{2}\theta ^T \Phi ^T D^{\beta} \Phi \theta\\
&{\rm s.t.}\quad 0 = \Phi ^T D^{\beta} (R^\pi   + \gamma P^\pi  \Phi \theta  - \Phi \theta )
\end{align*}
\end{problem}

Compared to~\cref{problem:3} for GTD2, the main difference is that it has a quadratic objective instead of the null objective. A natural question is if this optimization admits the identical solution to~\cref{problem:3}. The answer is indeed positive.
\begin{proposition}
A solution of~\cref{problem:5} exists, and is unique given by $\theta = \theta^*$.
\end{proposition}
\begin{proof}
It is clear from~\cref{prop:2} that the inequality constraint has a unique feasible point $\theta = \theta^*$. Therefore, the optimal solution is also uniquely determined by $\theta^*$. This completes the proof.
\end{proof}

To derive a saddle-point formulation again, let us consider the Lagrangian function for~\cref{problem:5}
\begin{align}
L(\theta ,\lambda ) = \frac{1}{2}\theta ^T \Phi ^T D^{\beta} \Phi \theta  + \lambda ^T \Phi ^T D^{\beta} (R^\pi   + \gamma P^\pi  \Phi \theta  - \Phi \theta )\label{eq:9}
\end{align}
Note that it is concave in $\lambda$ and strongly convex in $\theta$. Compared to the Lagrangian function of GTD2 in~\eqref{eq:Lagrangian1}, the regularization term, $- \frac{1}{2}\lambda ^T \Phi ^T D^{\beta} \Phi \lambda$, which is strongly concave in $\lambda$, is replaced with the regularization term, $\frac{1}{2}\theta ^T \Phi ^T D^{\beta} \Phi \theta$, which is strongly convex in $\theta$.
The corresponding min-max saddle-point formulation is given as follows.
\begin{problem}\label{problem:6}
Solve for $(\theta ,\lambda )\in {\mathbb R}^q \times {\mathbb R}^q$ the optimization
\begin{align*}
\min_{\theta  \in {\mathbb R}^q } \max_{\lambda  \in {\mathbb R}^q } L(\theta ,\lambda ): =& \frac{1}{2}\theta ^T \Phi ^T D^{\beta} \Phi \theta \\
& + \lambda ^T \Phi ^T D^{\beta} (R^\pi   + \gamma P^\pi  \Phi \theta  - \Phi \theta ).
\end{align*}
\end{problem}

Again, we can conclude that the solutions of~\cref{problem:6} is also identical to those of~\cref{problem:2}.
\begin{proposition}\label{prop:6}
\cref{problem:6} admits a unique solution given by $\theta  = \theta^*$ and $\lambda  = \lambda^*$, where $\theta^*$ is given in~\cref{lemma:2}, and
\begin{align}
\lambda^*:=& (\Phi ^T (\gamma P^\pi   - I)^T D^{\beta} \Phi )^{ - 1} \Phi ^T D^{\beta} \Phi\nonumber\\
&\times (\Phi ^T D^{\beta} (\gamma P^\pi   - I)\Phi )^{ - 1} \Phi ^T D^{\beta} R^\pi.\label{eq:8}
\end{align}
\end{proposition}
The results in~\cref{prop:6} can be easily obtained by solving the stationary point condition for~\cref{problem:6}, i.e., $\nabla _\theta  L(\theta ,\lambda ) = 0$ and $\nabla _\lambda  L(\theta ,\lambda ) = 0$. Therefore, the detailed proof is omitted here.
Similar to the previous section, the continuous-time PDGD of~\cref{problem:5} (or equivalently,~\cref{problem:6}) is
\begin{align}
\dot \theta_t =&  - \Phi ^T D^\beta \Phi \theta_t  - (\gamma P^\pi  \Phi  - \Phi )^T D^\beta \Phi \lambda_t, \nonumber\\
\dot \lambda_t  =& \Phi ^T D^\beta (R^\pi   + \gamma P^\pi  \Phi \theta_t  - \Phi \theta_t ),\label{eq:4}
\end{align}
and its discrete-time counterpart (by Euler discretization) is
\begin{align}
\theta _{k + 1}  =& \theta _k  - \alpha _k (\Phi ^T D^\beta \Phi \theta _k  + (\gamma P^\pi  \Phi  - \Phi )^T D^\beta \Phi \lambda _k ),\nonumber\\
\lambda _{k + 1}  =& \lambda _k  + \alpha _k \Phi ^T D^\beta (R^\pi   + \gamma P^\pi  \Phi \theta _k  - \Phi \theta _k ).\label{eq:3}
\end{align}

With the samples $s_k \sim d^{\beta}$, $a_k \sim \pi(\cdot|s_k)$, and $s_k'\sim P(\cdot |s_k,a_k)$, a stochastic approximation~\cite{wang2016online,chen2016stochastic} of the discrete-time counterpart is given as
\begin{align*}
\theta _{k + 1}  =& \theta _k  - \alpha _k (\Phi ^T e_{s_k } e_{s_k }^T \Phi \theta _k  + \Phi ^T (\gamma e_{s_k } e_{s_{k + 1} }^T  - I)^T \Phi \lambda _k ),\nonumber\\
\lambda _{k + 1}  =& \lambda _k  + \alpha _k \Phi ^T (e_{s_k } e_{s_k }^T r_k  + \gamma e_{s_k } e_{s_{k + 1} }^T \Phi \theta _k  - e_{s_k } e_{s_k }^T \Phi \theta _k ).
\end{align*}
The proposed algorithm is summarized in~\cref{algo:GTD3}, which is  equivalent to the updates in the above recursion with the importance sampling~\cite{precup2001off}.
\begin{algorithm}[h]
\caption{GTD3}
\begin{algorithmic}[1]

\State Set the step-size $(\alpha_k)_{k=0}^\infty$, and initialize $(\theta _0,\lambda_0 )$.

\For{$k \in \{0,\ldots\}$}

\State Observe $s_k \sim d^{\beta}$, $a_k \sim \beta(\cdot|s_k)$, and $s_k'\sim P(\cdot | s_k,a_k)$, $r_k :=r(s_k,a_k,s_k')$.

\State Update parameters according to
\begin{align*}
&\theta _{k + 1}  = \theta _k  + \alpha _k [(\phi_k  - \gamma \rho _k \phi_{k}')(\phi_k ^T \lambda _k ) - \phi_k (\phi_k ^T \theta_k )],\\
&\lambda _{k + 1}  = \lambda _k  + \alpha _k \delta _k \phi_k,
\end{align*}
where $\phi_k:=\phi(s_k),\phi_{k}':=\phi(s_{k}')$, $\rho_k : = \frac{{\pi (a_k |s_k )}}{{\beta (a_k |s_k )}}$, and $\delta_k =\rho _k r_k +\gamma \rho _k (\phi_{k}')^T \theta_k -\phi_k^T \theta_k$.

\EndFor
\end{algorithmic}
\label{algo:GTD3}
\end{algorithm}

Note that~\cref{algo:GTD3} is different from GTD2, linear TD with
gradient correction (TDC)~\cite{sutton2009fast}, and the original GTD in~\cite{sutton2009convergent}.
Moreover, the optimization problem corresponding to~\cref{algo:GTD3} (\cref{problem:5}) already has the required structures in~\cref{lemma:1}. Therefore,~\cref{lemma:1} can be directly applied to prove the global stability of the corresponding PDGD in~\eqref{eq:4}. Note also that the PDGD in~\eqref{eq:4} is also the ODE model of~\cref{algo:GTD3}.
\begin{proposition}\label{prop:3}
Consider the trajectory $(\theta_t, \lambda_t)$ of the PDGD in~\eqref{eq:4}. Then, $(\theta_t, \lambda_t)\to (\theta^*, \lambda^*)$ as $t\to \infty$.
\end{proposition}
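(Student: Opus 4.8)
The plan is to recognize \cref{problem:5} as a direct instance of the template \cref{problem:0} and then invoke \cref{lemma:1} essentially verbatim, since the text has already argued that the continuous-time PDGD of \cref{problem:5} coincides with \eqref{eq:4}. Concretely, I would first read off the data of \cref{problem:0}: take the objective $f(\theta)=\tfrac{1}{2}\theta^T\Phi^TD^\beta\Phi\theta$, the constraint matrix $A=\Phi^TD^\beta(\gamma P^\pi-I)\Phi$, and the right-hand side $b=-\Phi^TD^\beta R^\pi$, so that $A\theta=b$ reproduces the equality constraint $0=\Phi^TD^\beta(R^\pi+\gamma P^\pi\Phi\theta-\Phi\theta)$. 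One then checks that with this identification $\nabla_\theta L=\Phi^TD^\beta\Phi\theta+A^T\lambda$ and $\nabla_\lambda L=A\theta-b$ exactly reproduce the Lagrangian \eqref{eq:9}, confirming that \eqref{eq:4} is the PDGD that \cref{lemma:1} governs.

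Next I would verify the three hypotheses of \cref{lemma:1}. Twice differentiability of $f$ is immediate, as $f$ is quadratic with constant Hessian $\Phi^TD^\beta\Phi$. Because $\Phi$ has full column rank and $D^\beta$ is a positive-definite diagonal matrix, $\Phi^TD^\beta\Phi$ is symmetric positive definite; hence $f$ is $\mu$-strongly convex and $l$-smooth with $\mu=\lambda_{\min}(\Phi^TD^\beta\Phi)>0$ and $l=\lambda_{\max}(\Phi^TD^\beta\Phi)$, which is precisely the required two-sided inequality on $(\nabla f(x)-\nabla f(y))^T(x-y)$. For the rank condition, observe that $A=\Phi^TD^\beta(\gamma P^\pi-I)\Phi$ is a square matrix that is nonsingular by \cref{lemma:2}(2), and a nonsingular square matrix has full row rank, so the full-row-rank assumption on $A$ holds.

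With all hypotheses in place, \cref{lemma:1} yields a unique saddle point of \eqref{eq:9} to which every trajectory of \eqref{eq:4} converges (indeed exponentially, though only asymptotic convergence is claimed). It then remains to identify this saddle point with the pair named in the statement: the preceding proposition has already solved $\nabla_\theta L=0$ and $\nabla_\lambda L=0$ to obtain the unique stationary point $\theta=\theta^*$ and $\lambda=\lambda^*$ with $\lambda^*$ as in \eqref{eq:8}. Since the saddle point supplied by \cref{lemma:1} is unique, the two must coincide, and the claimed limit $(\theta_t,\lambda_t)\to(\theta^*,\lambda^*)$ follows.

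I do not anticipate a genuine obstacle, as the argument is a verification-and-invocation rather than a new construction. The only point demanding care is the strong-convexity hypothesis, and this is exactly where the design of GTD3 matters: the regularizer in \eqref{eq:9} is $\tfrac{1}{2}\theta^T\Phi^TD^\beta\Phi\theta$ rather than the null objective of \cref{problem:3} used for GTD2, and it is precisely this term that furnishes the positive-definite Hessian and thereby removes the difficulty (noted in the previous section) that prevented applying \cref{lemma:1} to \cref{problem:3} in its original form.
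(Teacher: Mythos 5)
Your proposal is correct and follows exactly the same route as the paper's own proof: verify the hypotheses of \cref{lemma:1} for \cref{problem:5} (strong convexity, smoothness, and twice differentiability of the quadratic objective, plus full row rank of the constraint matrix via the nonsingularity in \cref{lemma:2}) and then invoke it to conclude convergence of \eqref{eq:4} to the unique saddle point $(\theta^*,\lambda^*)$. Your write-up is in fact more explicit than the paper's, spelling out the identification $f(\theta)=\tfrac{1}{2}\theta^T\Phi^TD^\beta\Phi\theta$, $A=\Phi^TD^\beta(\gamma P^\pi-I)\Phi$, $b=-\Phi^TD^\beta R^\pi$ and the match with \eqref{eq:9} and \eqref{eq:4}, which the paper leaves implicit.
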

\begin{proof}
\cref{problem:5} has a strongly convex, smooth, and twice differentiable objective function. Moreover, $\Phi ^T (I - \gamma P^\pi  )^T D^{\beta} \Phi$ is nonsingular by~\cref{assumption:2}, and hence is full row rank. The other assumptions are also met. Therefore, the PDGD of~\cref{problem:5}, given in~\eqref{eq:4}, is globally asymptotically stable, and converges to its unique equilibrium point $(\theta^*, \lambda^*)$, where $\lambda^*$ is defined in~\eqref{eq:8}. This completes the proof.
\end{proof}

Based on~\cref{prop:3}, convergence of~\cref{algo:GTD3} can be proved using the Bokar and Mayn theorem.
\begin{theorem}\label{thm:convergence2}
Consider~\cref{algo:GTD3}, and assume that the step-size satisfy~\eqref{eq:step-size-rule}.
Then, $\theta_k \to \theta^*$ as $k\to \infty$ with probability one.
\end{theorem}
The proof of~\cref{thm:convergence2} can be found in Appendix.

\begin{remark}
A different algorithm can be obtained with the following Lagrangian function $L(\theta ,\lambda ) = \frac{1}{2}\theta ^T \theta  + \lambda ^T \Phi ^T D^{\beta} (R^\pi   + \gamma P^\pi  \Phi \theta  - \Phi \theta )$, which has different convergence properties. In general, the corresponding algorithm performs better with smaller step-sizes, while in general, GTD3 converges faster.
\end{remark}

In this section, we proposed a new version of GTD based on a new saddle-point formulation in~\cref{problem:6}. In the next section, we propose another version of GTD based on the Lagrangian function which has more symmetric form.

\section{GTD4 \& 5}

Let us recall the Lagrangian functions~\eqref{eq:Lagrangian1} for GTD2 and~\eqref{eq:9} for GTD3.
The Lagrangian function in~\eqref{eq:9} replaces $\frac{1}{2}\theta ^T \Phi ^T D^{\beta} \Phi \theta$ in~\eqref{eq:Lagrangian1} with $-\frac{1}{2}\lambda ^T \Phi ^T D^{\beta} \Phi \lambda$.
In this section, we investigate the function
\begin{align}
L(\theta ,\lambda ) =& \frac{\sigma}{2}\theta ^T \Phi ^T D^{\beta} \Phi \theta + \lambda ^T \Phi ^T D^{\beta} (R^\pi   + \gamma P^\pi  \Phi \theta  - \Phi \theta )\nonumber\\
& - \frac{1 }{2}\lambda ^T \Phi ^T D^{\beta} \Phi \lambda, \label{eq:10}
\end{align}
where $\sigma  \geq 0$ is a weight on the first regularization term, i.e., a design parameter. Note that with $\sigma =0$,~\eqref{eq:10} is reduced to~\eqref{eq:Lagrangian1}. The function includes both $\frac{1}{2}\lambda ^T \Phi ^T D^{\beta} \Phi \lambda$ and $\frac{1}{2}\theta ^T \Phi ^T D^{\beta} \Phi \theta$, and hence, more symmetric than~\eqref{eq:Lagrangian1} and~\eqref{eq:9}.
Moreover, it is strongly convex in $\theta$ and strongly concave in $\lambda$. The corresponding min-max saddle-point problem is summarized below for convenience.
\begin{problem}\label{problem:8}
Solve for $(\theta ,\lambda )\in {\mathbb R}^q \times {\mathbb R}^q$ the min-max problem
\begin{align*}
&\min _{\theta\in {\mathbb R}^q} \max_{\lambda\in {\mathbb R}^q}  L(\theta ,\lambda )\\
:=& \frac{\sigma}{2}\theta ^T \Phi ^T D^\beta\Phi \theta  + \lambda ^T \Phi ^T D^\beta (R^\pi   + \gamma P^\pi  \Phi \theta  - \Phi \theta )\\
& - \frac{1}{2}\lambda ^T \Phi ^T D^\beta \Phi \lambda.
\end{align*}
\end{problem}

Solving the stationary point conditions, $\nabla _\theta  L(\theta ,\lambda )=0$ and $\nabla _\lambda  L(\theta ,\lambda ) = 0$, we obtain the following solution of~\cref{problem:8} for the $\theta$-coordinate: $\hat \theta _\sigma : =  - (\Phi ^T D^{\beta} (\gamma P^\pi   - I)\Phi  + \sigma E)^{ - 1} \Phi ^T D^{\beta} R^\pi$, where $E: = \Phi ^T D^{\beta}\Phi (\Phi ^T (\gamma P^\pi   - I)^T D^{\beta} \Phi )^{ - 1} \Phi ^T D^{\beta} \Phi$.

From the result, it turns out that the solution of~\cref{problem:8} for the $\theta$-coordinate is not exactly identical to $\theta^*$, but includes a bias term $\sigma E$. However, since $\hat \theta _\sigma   \to \theta ^*$ as $\sigma \to 0$, one can control the degree of the error by adjusting $\sigma$. Moreover, larger $\sigma$ can more stabilize the final algorithm, and speed up its convergence because it improves the degree of the strong concavity of~\eqref{eq:10}. Therefore, there exists a trade-off between stability and bias by choosing $\sigma$. Besides, the solution is formally stated in the following proposition for convenience.
\begin{proposition}\label{prop:4}
The unique saddle-point of~\cref{problem:8} for $\theta$-coordinate is given by $\hat \theta _\sigma : =  - (\Phi ^T D^{\beta} (\gamma P^\pi   - I)\Phi  + \sigma E)^{ - 1} \Phi ^T D^{\beta} R^\pi$, where $E: = \Phi ^T D^{\beta} \Phi (\Phi ^T (\gamma P^\pi   - I)^T D^{\beta} \Phi )^{ - 1} \Phi ^T D^{\beta} \Phi$.
\end{proposition}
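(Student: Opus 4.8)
The plan is to characterize the saddle point through the stationarity conditions $\nabla_\theta L(\theta,\lambda)=0$ and $\nabla_\lambda L(\theta,\lambda)=0$ associated with~\eqref{eq:10}, and then eliminate $\lambda$ to expose a closed form for the $\theta$-coordinate. To keep the algebra transparent I would introduce the shorthand $A:=\Phi^T D^{\beta}(\gamma P^\pi-I)\Phi$, $C:=\Phi^T D^{\beta}\Phi$, and $b:=\Phi^T D^{\beta}R^\pi$, so that $E=C(A^T)^{-1}C$ and the claimed formula reads $\hat\theta_\sigma=-(A+\sigma E)^{-1}b$. Here $A$, and hence $A^T$, is nonsingular by~\cref{lemma:2}, while $C$ is symmetric positive definite because $\Phi$ has full column rank and $D^{\beta}$ has strictly positive diagonal entries.

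First I would record that, since $L$ in~\eqref{eq:10} is a strongly convex--strongly concave quadratic, its saddle point exists and is unique (cf.~\cref{sec:saddle-point-problem}); equivalently, the stationarity conditions form a linear system whose unique solution is the saddle point. Differentiating~\eqref{eq:10} and using the symmetry of $D^{\beta}$ gives
\begin{align}
\sigma C\theta + A^T\lambda &= 0,\label{eq:p4-a}\\
A\theta - C\lambda + b &= 0.\label{eq:p4-b}
\end{align}
Next I would solve for $\lambda$ from each equation separately: \eqref{eq:p4-b} yields $\lambda=C^{-1}(A\theta+b)$, while~\eqref{eq:p4-a}, together with the nonsingularity of $A^T$, yields $\lambda=-\sigma(A^T)^{-1}C\theta$. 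Equating these two expressions and multiplying through on the left by $C$ gives $A\theta+b=-\sigma C(A^T)^{-1}C\theta=-\sigma E\theta$, that is, $(A+\sigma E)\theta=-b$. This produces the stated formula as soon as $A+\sigma E$ is shown to be invertible, after which $\lambda$ is recovered uniquely from $\lambda=C^{-1}(A\theta+b)$.

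The substantive step, and the one I expect to be the main obstacle, is verifying that $A+\sigma E$ is nonsingular for every $\sigma\ge 0$. I would handle this via a Schur-complement identity: since $A^T$ and $C$ are invertible, $A+\sigma E$ is invertible if and only if $A^T C^{-1}(A+\sigma E)$ is, and because $A^T C^{-1}E=A^T C^{-1}C(A^T)^{-1}C=C$ this product equals $A^T C^{-1}A+\sigma C$. The matrix $A^T C^{-1}A$ is symmetric positive definite (as $A$ is nonsingular and $C^{-1}$ is positive definite), and $\sigma C$ is positive semidefinite, so their sum is positive definite and therefore nonsingular. Hence $A+\sigma E$ is nonsingular, $\theta=\hat\theta_\sigma$, and the accompanying $\lambda$ is determined uniquely. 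The gradient computation and the two-way elimination of $\lambda$ are routine; the only place where~\cref{lemma:2} is genuinely indispensable is the invertibility argument just described.
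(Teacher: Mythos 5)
Your proposal is correct and follows exactly the route the paper indicates: the paper states that \cref{prop:4} follows by ``direct calculation'' from solving $\nabla_\theta L(\theta,\lambda)=0$ and $\nabla_\lambda L(\theta,\lambda)=0$ and omits the details, which is precisely the elimination you carry out. Your write-up is in fact more complete than the paper's, since you also verify the nonsingularity of $\Phi^T D^{\beta}(\gamma P^\pi - I)\Phi + \sigma E$ via the identity $A^T C^{-1}(A+\sigma E) = A^T C^{-1}A + \sigma C$, a point the paper leaves implicit.
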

The proof of~\cref{prop:4} is a direct calculation, and so omitted here for brevity.
Similar to the previous section, the continuous-time PDGD is
\begin{align}
\dot \theta _t  =&  - \nabla _\theta  L(\theta_t ,\lambda_t ) =  - \Phi ^T D\Phi \theta _t  - (\gamma P^\pi  \Phi  - \Phi )^T D^\beta \Phi \lambda _t\nonumber\\
\dot \lambda _t  =& \nabla _\lambda  L(\theta_t ,\lambda_t ) = \Phi ^T D^\beta (R^\pi   + \gamma P^\pi  \Phi \theta _t  - \Phi \theta _t ) - \sigma \Phi ^T D^\beta \Phi \lambda _t\label{eq:11}
\end{align}

In the following theorem, we first establish the global asymptotic stability of the PDGD in~\eqref{eq:11}.
\begin{proposition}\label{prop:5}
Consider the trajectory $(\theta_t, \lambda_t)$ of the PDGD in~\eqref{eq:11}, and let $(\hat \theta_\sigma,\hat \lambda_\sigma)$ be the corresponding unique saddle-point. Then, $(\theta_t, \lambda_t)\to (\hat \theta_\sigma,\hat \lambda_\sigma)$ as $t\to \infty$.
\end{proposition}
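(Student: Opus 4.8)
The plan is to treat the PDGD~\eqref{eq:11} as exactly what it is, namely an affine linear ODE, and to certify global asymptotic stability directly with a quadratic Lyapunov function rather than trying to force it into~\cref{lemma:1}. The reason a direct appeal to~\cref{lemma:1} fails here is that the strongly concave regularizer $-\frac{1}{2}\lambda^T \Phi^T D^{\beta} \Phi \lambda$ in~\eqref{eq:10} makes $L$ strongly concave (not merely linear) in $\lambda$, so~\eqref{eq:11} is no longer the PDGD of a plain equality-constrained Lagrangian of the form~\eqref{eq:0}. A self-contained Lyapunov argument sidesteps this difficulty entirely and covers all $\sigma \ge 0$ at once.

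First I would abbreviate $C := \Phi^T D^{\beta} \Phi$ and $A := \Phi^T D^{\beta} (\gamma P^\pi - I)\Phi$. Since $\Phi$ has full column rank and $D^{\beta} \succ 0$, the matrix $C$ is symmetric positive definite, and by~\cref{lemma:2} the matrix $A$ is nonsingular. Writing $b := \Phi^T D^{\beta} R^\pi$ and using $(\gamma P^\pi \Phi - \Phi)^T D^{\beta} \Phi = A^T$, the dynamics~\eqref{eq:11} become $\dot\theta_t = -C\theta_t - A^T \lambda_t$ and $\dot\lambda_t = A\theta_t - \sigma C\lambda_t + b$. Subtracting the stationarity identities satisfied by the saddle point $(\hat\theta_\sigma, \hat\lambda_\sigma)$ of~\cref{prop:4} and setting $\tilde\theta := \theta_t - \hat\theta_\sigma$ and $\tilde\lambda := \lambda_t - \hat\lambda_\sigma$ yields the homogeneous error system $\dot{\tilde\theta} = -C\tilde\theta - A^T \tilde\lambda$ and $\dot{\tilde\lambda} = A\tilde\theta - \sigma C\tilde\lambda$.

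The key step is to take the Lyapunov candidate $V = \frac{1}{2}\|\tilde\theta\|_2^2 + \frac{1}{2}\|\tilde\lambda\|_2^2$. Differentiating along trajectories, the cross terms $-\tilde\theta^T A^T \tilde\lambda$ and $\tilde\lambda^T A\tilde\theta$ are equal scalars of opposite sign and cancel, leaving $\dot V = -\tilde\theta^T C\tilde\theta - \sigma\,\tilde\lambda^T C\tilde\lambda$. For $\sigma > 0$ this is strictly negative for every nonzero $(\tilde\theta, \tilde\lambda)$ because $C \succ 0$, so the standard Lyapunov theorem gives global asymptotic stability, and hence $(\theta_t, \lambda_t) \to (\hat\theta_\sigma, \hat\lambda_\sigma)$.

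The main obstacle is the boundary case $\sigma = 0$ (which recovers GTD2), where $\dot V = -\tilde\theta^T C\tilde\theta$ is only negative semidefinite and vanishes on the entire subspace $\{\tilde\theta = 0\}$. Here I would invoke LaSalle's invariance principle: on $\{\tilde\theta = 0\}$ the error dynamics reduce to $\dot{\tilde\theta} = -A^T \tilde\lambda$, so remaining in this set forces $A^T \tilde\lambda = 0$, and the nonsingularity of $A$ (hence of $A^T$) forces $\tilde\lambda = 0$. Thus the largest invariant subset of $\{\dot V = 0\}$ is the origin, and global asymptotic stability again follows. It is precisely this last appeal to the nonsingularity in~\cref{lemma:2} that closes the degenerate case.
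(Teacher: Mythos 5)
Your proof is correct, and for the main case $\sigma>0$ it is essentially the paper's own argument: the paper uses exactly the Lyapunov candidate $V=\|\theta-\hat\theta_\sigma\|_2^2+\|\lambda-\hat\lambda_\sigma\|_2^2$, obtains $\dot V=-2\tilde\theta^T\Phi^TD^\beta\Phi\tilde\theta-2\sigma\tilde\lambda^T\Phi^TD^\beta\Phi\tilde\lambda$ after the same cancellation of the skew cross terms, and invokes the Lyapunov theorem. Where you genuinely diverge is the boundary case $\sigma=0$: the paper disposes of it by noting that~\eqref{eq:11} with $\sigma=0$ coincides with the GTD3 dynamics~\eqref{eq:4} and pointing back to~\cref{prop:3}, whose proof rests on the external result~\cref{lemma:1} (the Qu--Li theorem for equality-constrained Lagrangians), whereas you close this case with LaSalle's invariance principle: $\dot V=-\tilde\theta^TC\tilde\theta$ vanishes only on $\{\tilde\theta=0\}$, invariance there forces $A^T\tilde\lambda=0$, and nonsingularity of $A=\Phi^TD^\beta(\gamma P^\pi-I)\Phi$ from~\cref{lemma:2} forces $\tilde\lambda=0$. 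Your route buys a self-contained, uniform treatment of all $\sigma\ge 0$ with no dependence on~\cref{lemma:1}, and your diagnosis of why~\cref{lemma:1} cannot be applied directly when $\sigma>0$ (the $\lambda$-damping term destroys the plain Lagrangian structure~\eqref{eq:0}) correctly explains why the paper itself also abandons that lemma for $\sigma>0$; the paper's route, in exchange, reuses already-established machinery and inherits the exponential (not merely asymptotic) convergence rate that~\cref{lemma:1} provides in the $\sigma=0$ case. One further small point in your favor: you state the negativity condition correctly for all $(\tilde\theta,\tilde\lambda)\neq(0,0)$, whereas the paper's phrasing ``for all $\theta_t-\hat\theta_\sigma\ne 0,\ \lambda_t-\hat\lambda_\sigma\ne 0$'' is loose, since $\dot V<0$ also holds when exactly one of the two error components is nonzero.
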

\begin{proof}
The PDGD in~\eqref{eq:11} can be rewritten by{\small
\begin{align*}
&\frac{d}{{dt}}\left[ {\begin{array}{*{20}c}
   {\theta_t - \hat \theta_\sigma }  \\
   {\lambda_t - \hat \lambda_\sigma }  \\
\end{array}} \right] = \left[ {\begin{array}{*{20}c}
   { - \nabla _\theta  L(\theta_t ,\lambda_t )}  \\
   {\nabla _\lambda  L(\theta_t ,\lambda_t )}  \\
\end{array}} \right]\\
& = \left[ {\begin{array}{*{20}c}
   { - \Phi ^T D\Phi } & { - (\gamma P^\pi  \Phi  - \Phi )^T D^\beta \Phi }  \\
   {\Phi ^T D^\beta (\gamma P^\pi  \Phi  - \Phi )\theta } & { - \sigma \Phi ^T D^\beta \Phi }  \\
\end{array}} \right]\left[ {\begin{array}{*{20}c}
   {\theta_t - \hat \theta_\sigma }  \\
   {\lambda_t - \hat \lambda_\sigma }  \\
\end{array}} \right].
\end{align*}}

The proof for $\sigma = 0$ is given in the proof of~\cref{prop:3}.
Therefore, we only consider the case $\sigma > 0$ here. Let us consider the positive definite Lyapunov function candidate
$V(\theta  - \hat \theta _\sigma  ,\lambda  - \hat \lambda _\sigma  ) := (\theta  - \hat \theta _\sigma  )^T (\theta  - \hat \theta _\sigma  ) + (\lambda  - \hat \lambda_\sigma  )^T (\lambda  - \hat \lambda _\sigma  )$. It's time-derivative along the trajectory is{\small
\begin{align*}
&\frac{d}{{dt}}V(\theta_t - \hat \theta _\sigma  ,\lambda_t - \hat \lambda _\sigma  )\\
=& \left[ {\begin{array}{*{20}c}
   {\theta_t  - \hat \theta _\sigma  }  \\
   {\lambda_t - \hat \lambda _\sigma  }  \\
\end{array}} \right]^T \left[ {\begin{array}{*{20}c}
   { - 2\Phi ^T D^\beta \Phi } & 0  \\
   0 & { - 2\sigma \Phi ^T D^\beta \Phi }  \\
\end{array}} \right]\left[ {\begin{array}{*{20}c}
   {\theta _t - \hat \theta _\sigma  }  \\
   {\lambda_t  - \hat \lambda _\sigma  }  \\
\end{array}} \right]\\
 <& 0,\quad \forall \theta_t - \hat \theta _\sigma   \ne 0,\lambda_t - \hat \lambda _\sigma   \ne 0.
\end{align*}}
Therefore, by the Lyapunov theorem~\cite{khalil2002nonlinear}, the system is globally asymptotically stable. This completes the proof.
\end{proof}

With the samples $s_k \sim d^{\beta}$, $a_k \sim \pi(\cdot|s_k)$, and $s_k'\sim P(\cdot |s_k,a_k)$, a stochastic PDGD corresponding to~\eqref{eq:11} is given as
\begin{align*}
\theta _{k + 1} =& \theta _k  - \alpha _k (\Phi ^T e_{s_k } e_{s_k }^T \Phi \theta _k  + (\gamma e_{s_k } e_{s_k '}^T \Phi  - \Phi )^T e_{s_k } e_{s_k }^T \Phi \lambda _k ),\\
\lambda _{k + 1}  =& \lambda _k  + \alpha _k (\Phi ^T e_{s_k } e_{s_k }^T (e_{s_k } r_k  + \gamma e_{s_k } e_{s_k '}^T \Phi \theta _k  - \Phi \theta _k )\\
& - \sigma \Phi ^T e_{s_k } e_{s_k }^T \Phi \lambda _k ),
\end{align*}
which is the second proposed algorithm, called GTD4 in this paper. The overall algorithm with the importance sampling (for off-policy learning) is summarized in~\cref{algo:GTD4}.
\begin{algorithm}[h]
\caption{GTD4}
\begin{algorithmic}[1]

\State Set the step-size $(\alpha_k)_{k=0}^\infty$, and initialize $(\theta _0,\lambda_0 )$.

\For{$k \in \{0,\ldots\}$}

\State Observe $s_k \sim d^{\beta}$, $a_k \sim \beta(\cdot|s_k)$, and $s_k'\sim P(\cdot |s_k,a_k)$, $r_k :=r(s_k,a_k,s_k')$.

\State Update parameters according to
\begin{align*}
&\theta _{k + 1}  = \theta _k  + \alpha _k [(\phi_k  - \gamma \rho _k \phi_{k}')(\phi_k^T \lambda _k ) - \phi_k (\phi_k ^T \theta_k )]\\
&\lambda _{k + 1}  = \lambda _k  + \alpha _k (\delta _k  - \sigma \phi_k^T \lambda_k )\phi_k
\end{align*}
where $\phi_k:=\phi(s_k),\phi_{k}':=\phi(s_{k}')$, $\rho _k : = \frac{{\pi (a_k |s_k )}}{{\beta (a_k |s_k )}}$, and $\delta_k =\rho _k r_k +\gamma \rho _k (\phi_{k}')^T \theta_k -\phi_k^T \theta_k$.

\EndFor
\end{algorithmic}
\label{algo:GTD4}
\end{algorithm}

With~\cref{prop:5}, one can easily prove the convergence of~\cref{algo:GTD4} using the ODE method in~\cref{lemma:Borkar}.
\begin{theorem}\label{thm:convergence3}
Consider~\cref{algo:GTD4}, and assume that the step-size satisfy~\eqref{eq:step-size-rule}.
Then, $\theta_k \to \hat \theta_\sigma$ as $k \to \infty$ with probability one.
\end{theorem}
\begin{proof}
By~\cref{prop:5}, the PDGD in~\eqref{eq:11} is globally asymptotically stable. The remaining parts of the proof are almost identical to those of~\cref{thm:convergence2}, and hence, are omitted here for brevity.
\end{proof}
\begin{remark}
The main difference of~\cref{algo:GTD4} (GTD4) and~\cref{algo:GTD2} (GTD2) lies in the existence of the additional regularization term in their Lagrangian functions in~\eqref{eq:10} and~\eqref{eq:Lagrangian1}.
By adding the regularization term, the saddle-point problem becomes strongly convex-concave, which may potentially accelerate the convergence. Intuitively, the additional regularization terms make the slopes of the Lagrangian functions more steep in both ascent and descent directions, and hence, one can expect that the stochastic gradient ascent and descent methods may converge faster to the stationary points.
\end{remark}

Finally, a modification of~\eqref{eq:10} leads to GTD5
\begin{align*}
L(\theta ,\lambda ) =& \frac{\sigma}{2}\theta ^T\theta + \lambda ^T \Phi ^T D^{\beta} (R^\pi   + \gamma P^\pi  \Phi \theta  - \Phi \theta )\\
& - \frac{1 }{2}\lambda ^T \Phi ^T D^{\beta} \Phi \lambda,
\end{align*}
where $ \frac{\sigma}{2}\theta ^T \Phi ^T D^{\beta} \Phi \theta $ in~\eqref{eq:10} is replaced with $\frac{\sigma}{2}\theta ^T\theta$ in the above equation. The algorithm is summarized in~\cref{algo:GTD5}.
\begin{algorithm}[h]
\caption{GTD5}
\begin{algorithmic}[1]

\State Set the step-size $(\alpha_k)_{k=0}^\infty$, and initialize $(\theta _0,\lambda_0 )$.

\For{$k \in \{0,\ldots\}$}

\State Observe $s_k \sim d^{\beta}$, $a_k \sim \beta(\cdot|s_k)$, and $s_k'\sim P(\cdot |s_k,a_k)$, $r_k :=r(s_k,a_k,s_k')$.

\State Update parameters according to
\begin{align*}
&\theta _{k + 1}  = \theta _k  + \alpha _k [(\phi_k  - \gamma \rho _k \phi_{k}')(\phi_k^T \lambda _k ) - \phi_k (\phi_k ^T \theta_k )]\\
&\lambda _{k + 1}  = \lambda _k  + \alpha _k (\delta _k\phi_k - \sigma \lambda_k)
\end{align*}
where $\phi_k:=\phi(s_k),\phi_{k}':=\phi(s_{k}')$, $\rho _k : = \frac{{\pi (a_k |s_k )}}{{\beta (a_k |s_k )}}$, and $\delta_k =\rho _k r_k +\gamma \rho _k (\phi_{k}')^T \theta_k -\phi_k^T \theta_k$.

\EndFor
\end{algorithmic}
\label{algo:GTD5}
\end{algorithm}

Since $\sigma>0$ leads to biases in solutions, a reasonable heuristic approach is to diminish $\sigma$, i.e., $\sigma \to 0$ as $k \to \infty$. A comparative analysis of several GTDs is given in the next section.

\section{Comparative analysis}
We randomly generated MDPs with $100$ states and $10$ actions, and behavior policies, and set $\gamma = 0.9$. The reward function was generated such that $r(s,a,s')$ is uniformly distributed over $[-1,1]$. Then, to make it sparse, elements with $|r(s,a,s')|\leq 0.2$ were set to be zero.
Similarly, $10$ feature functions were generated such that each element uniformly distributed over $[-1,1]$, and $\Phi$ is full column rank.
\begin{figure}[t]
\centering\subfigure[Step-size $\alpha _k  = 5/(k + 5)$]{\includegraphics[width=7cm,height=4.5cm]{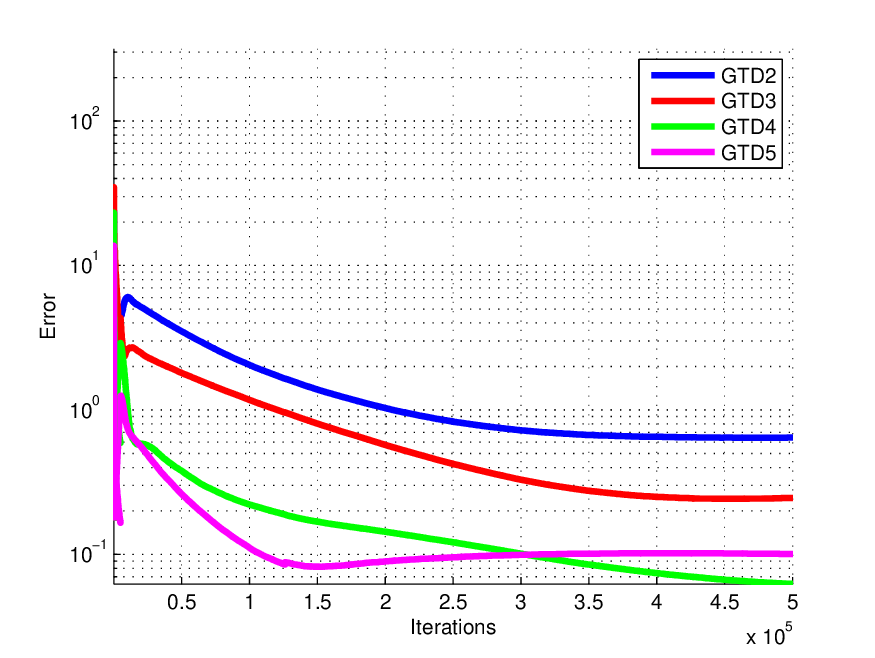}}
\centering\subfigure[Step-size $\alpha _k  = 10/(k + 10)$]{\includegraphics[width=7cm,height=4.5cm]{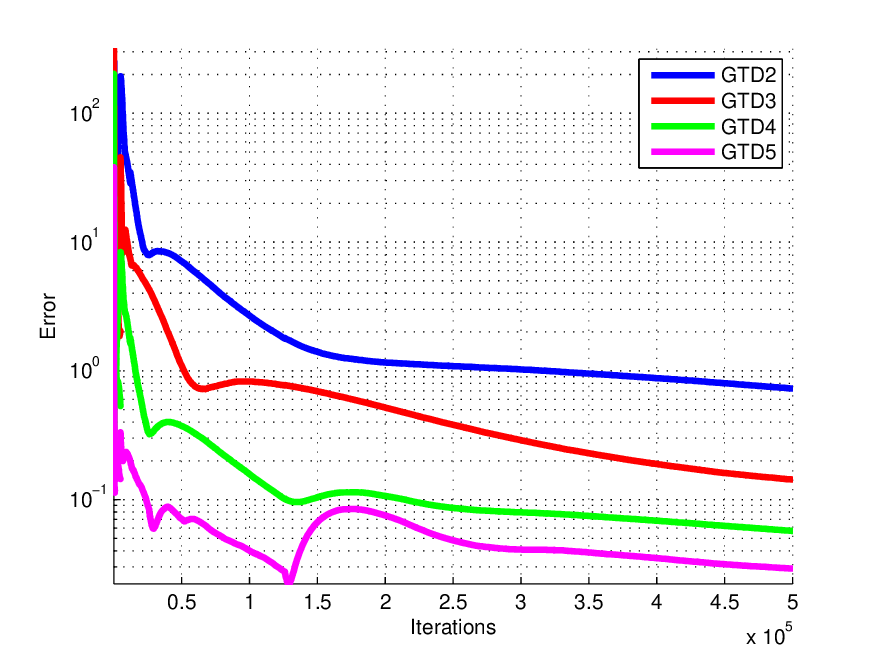}}
\caption{First instance: (a) Evolution of error, $\left\| \theta_k  - \theta^* \right\|$, for step-size $\alpha _k  = 5/(k + 5)$; (b) Evolution of error, $\left\| \theta_k  - \theta^* \right\|$, for step-size $\alpha _k  = 10/(k + 10)$. The figure illustrates error evolutions for GTD2 (blue), GTD3 (red), GTD4 (green), GTD5 (magenta) in a logarithmic scale. For GTD4 and GTD5, we used a diminishing $\sigma$: $\sigma _k  = 100/(k + 100)$. }\label{fig:1}
\end{figure}
\begin{figure}[t]
\centering\subfigure[Step-size $\alpha _k  = 5/(k + 5)$]{\includegraphics[width=7cm,height=4.5cm]{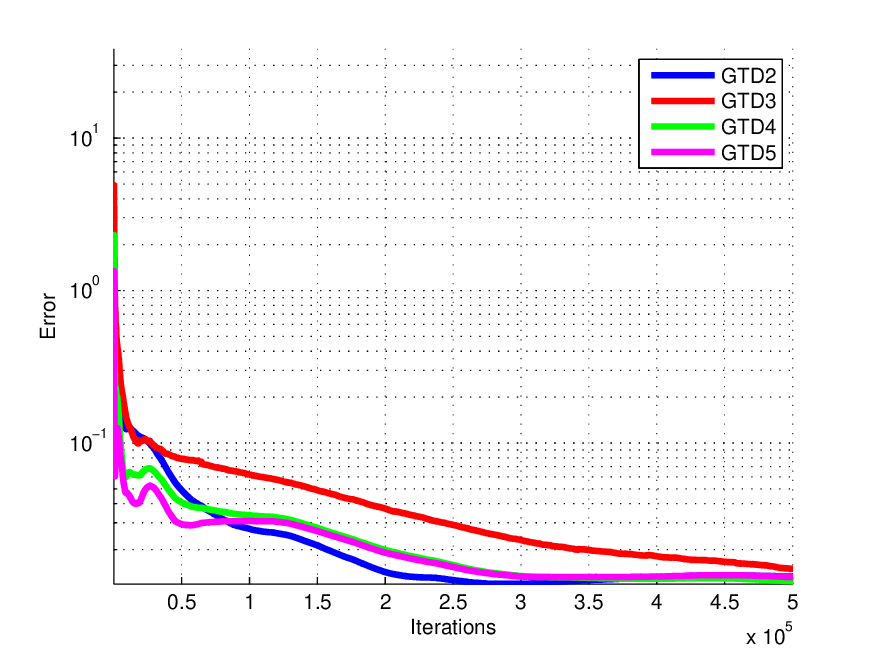}}
\centering\subfigure[Step-size $\alpha _k  = 10/(k + 10)$]{\includegraphics[width=7cm,height=4.5cm]{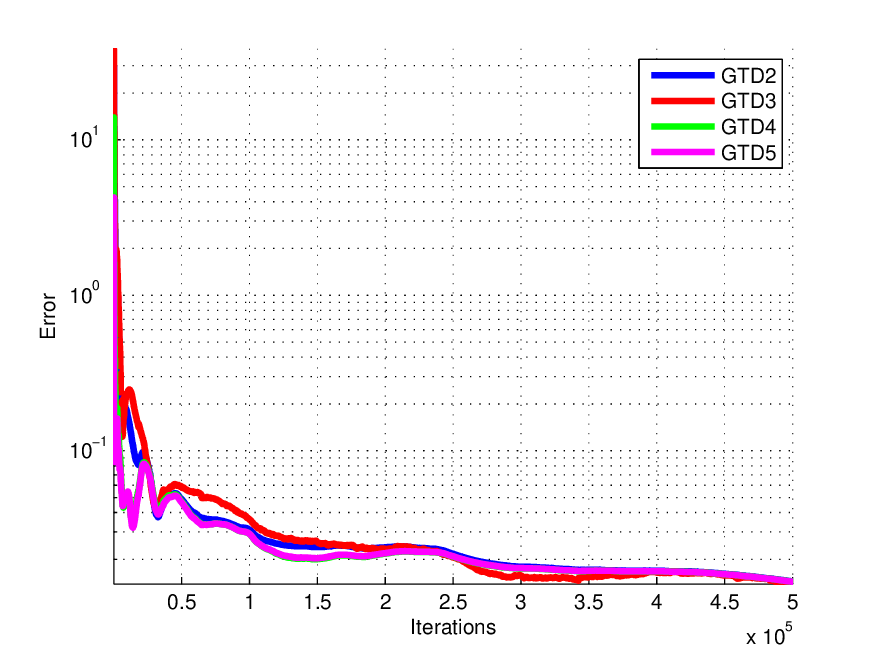}}
\caption{Second instance: (a) Evolution of error, $\left\| \theta_k  - \theta^* \right\|$, for step-size $\alpha _k  = 5/(k + 5)$; (b) Evolution of error, $\left\| \theta_k  - \theta^* \right\|$, for step-size $\alpha _k  = 10/(k + 10)$. The figure illustrates error evolutions for GTD2 (blue), GTD3 (red), GTD4 (green), GTD5 (magenta) in a logarithmic scale. For GTD4 and GTD5, we used a diminishing $\sigma$: $\sigma _k  = 100/(k + 100)$.}\label{fig:2}
\end{figure}
\cref{fig:1} shows error evolutions for different GTDs, GTD2 (blue line), GTD3 (red line), GTD4 (green line), GTD5 (magenta line), in a logarithmic scale. \cref{fig:1}{(a)} depicts results for step-size $\alpha _k  = 5/(k + 5)$, and \cref{fig:1}{(b)}for step-size $\alpha _k  = 10/(k + 10)$. For GTDs~4 and~5, we used a diminishing $\sigma$: $\sigma _k  = 100/(k + 100)$. The step-sizes were selected such that all the algorithms perform reasonably well. The results show an instance where GTD4 and GTD5 overcome GTD2, and GTD3. GTD3 converges slightly faster than GTD2 in this example.
\begin{remark}
The diminishing weight $\sigma_k$ has been selected from trial and errors. Intuitively, if the weight $\sigma_k$ diminishes too fast, then the algorithm quickly becomes identical to the standard GTD2 and GTD3. Therefore, the convergence speeds also become similar to GTD2 and GTD3. For the algorithms to be effective, the weight, $\sigma_k$, should not diminish too fast. On the other hand, if $\sigma_k$ diminishes too slowly, then the bias induced by the regularization vanishes too slowly during the learning. Therefore, there exists a trade-off between the convergence speed and bias, which leads to some tuning issues.
\end{remark}

\cref{fig:2} provides another instance where GTD2 performs slightly better than or equal to the other approaches. From our experiences, GTD4 and GTD5 outperform the other two approaches more frequently. For a fair and more comprehensive analysis, we ranked the four approaches based on the performance index $\sum_{k = 0}^{\tau} {\| \theta _k  - \theta ^* \|_2/1000}$ for 5000 randomly generated MDPs, where $N$ is the total number of iterations set to be $\tau= 250000$ in this example. In addition to the random generation scheme used in the previous two MDP instances, we also randomly select the number of states and number of actions uniformly distributed in $\{3,4,\ldots, 100 \}$ and $\{2,3,\ldots, 30 \}$, respectively. The number of feature functions is chosen such that it is around $1/10$ of the state size. Rankings of the different GTDs for 5000 MDP instances are summarized in~\cref{fig:3}, where each bar implies the number of MDP instances where the corresponding ranking is achieved by each method in terms of the performance index. We used the step-size $\alpha _k  = 5/(k + 5)$ and diminishing weight $\sigma _k  = 100/(k + 100)$ for this experiment. GTD5 takes the first place most frequently (3316 times over 5000 trials), and GTD4 takes the second-best places most frequently (3016 times over 5000 trials). GTD2 and GTD3 are comparable to each other.
The results suggest that GTD5 and GTD4 outperform the other approaches in most cases.
\begin{figure}[t]
\centering\includegraphics[width=7cm,height=4.5cm]{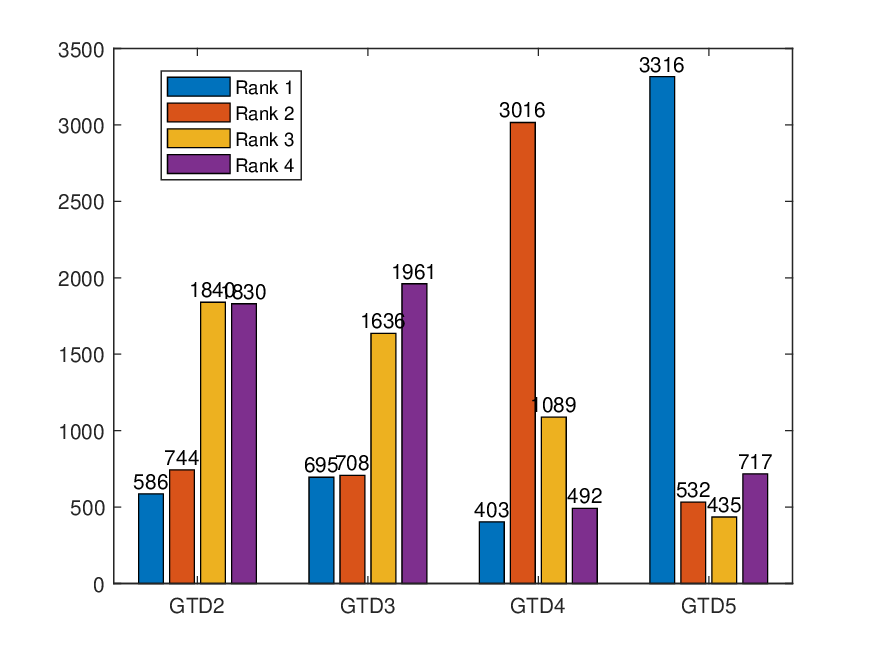}
\caption{Rankings of GTDs for 5000 MDP instances. }\label{fig:3}
\end{figure}

\section{Conclusion}
In this paper, we proposed variants of GTDs based on convex-concave saddle-point interpretations of GTDs, which allow new stability analysis based on recent results~\cite{qu2018exponential} on stability of PDGD. Performance of the GTDs was evaluated through numerical experiments, which suggest that GTD4 and GTD5 overcome the other methods for randomly generated 5000 MDPs. Therefore, we can conclude that the use of regularization terms with diminishing weights can potentially improve the convergence speed empirically. Besides, the convergence rate analysis is an important next step beyond the asymptotic convergence as the former analysis gives insights on how fast the iterates approach to the solution. Moreover, more combinations of regularization methods can lead to more versions of GTD, which have different properties and performances. These topics can be potential future works.

\bibliographystyle{IEEEtran}
\bibliography{reference}

\appendices

\section{Proof of~\cref{thm:convergence2}}\label{app:2}
The algorithm in~\eqref{eq:3} can be written as
\begin{align}
\left[ {\begin{array}{*{20}c}
   {\theta _{k + 1} }  \\
   {\lambda _{k + 1} }  \\
\end{array}} \right] = \left[ {\begin{array}{*{20}c}
   {\theta _k }  \\
   {\lambda _k }  \\
\end{array}} \right] + \alpha _k \left( {f\left( {\left[ {\begin{array}{*{20}c}
   {\theta _k }  \\
   {\lambda _k }  \\
\end{array}} \right]} \right) + \varepsilon _{k + 1} } \right),\label{eq:5}
\end{align}
where
\begin{align*}
f\left( {\left[ {\begin{array}{*{20}c}
   \theta   \\
   \lambda   \\
\end{array}} \right]} \right): =& \left[ {\begin{array}{*{20}c}
   {\Phi ^T D^\beta \Phi } & {\Phi ^T (\gamma P^\pi   - I)^T D^\beta \Phi }  \\
   {\Phi ^T D^\beta (\gamma P^\pi   - I)\Phi } & 0  \\
\end{array}} \right]\left[ {\begin{array}{*{20}c}
   \theta   \\
   \lambda   \\
\end{array}} \right]\\
& + \left[ {\begin{array}{*{20}c}
   0  \\
   {\Phi^T D^\beta R^\pi  }  \\
\end{array}} \right],
\end{align*}
and
\begin{align*}
\varepsilon _{k + 1}  =& \left[ {\begin{array}{*{20}c}
   {\Phi ^T e_{s_k } e_{s_k }^T \Phi \theta _k  + \Phi ^T (\gamma e_{s_k } e_{s_{k + 1} }^T  - I)^T \Phi \lambda _k }  \\
   {\Phi ^T (e_{s_k } e_{s_k }^T r_k  + \gamma e_{s_k } e_{s_{k + 1} }^T \Phi \theta _k  - e_{s_k } e_{s_k }^T \Phi \theta _k )}  \\
\end{array}} \right]\\
& - f\left( {\left[ {\begin{array}{*{20}c}
   {\theta _k }  \\
   {\lambda _k }  \\
\end{array}} \right]} \right)
\end{align*}

The proof is completed by examining all the statements in~\cref{assumption:1}:
\begin{enumerate}
\item To prove the first statement of~\cref{assumption:1}, we have
\begin{align*}
&\mathop {\lim }\limits_{c \to \infty } f\left( {c\left[ {\begin{array}{*{20}c}
   \theta   \\
   \lambda   \\
\end{array}} \right]} \right)/c\\
=& f_\infty  \left( {\left[ {\begin{array}{*{20}c}
   \theta   \\
   \lambda   \\
\end{array}} \right]} \right)\\
 =& \left[ {\begin{array}{*{20}c}
   {\Phi ^T D^\beta \Phi } & {\Phi ^T (\gamma P^\pi   - I)^T D^\beta \Phi }  \\
   {\Phi ^T D^\beta (\gamma P^\pi   - I)\Phi } & 0  \\
\end{array}} \right]\left[ {\begin{array}{*{20}c}
   \theta   \\
   \lambda   \\
\end{array}} \right].
\end{align*}

Moreover, since $f$ is affine in its argument, it is globally Lipschitz continuous.

\item The second statement of~\cref{assumption:1}: The PDGD of~\cref{problem:5} can be written as
\begin{align*}
&\frac{d}{{dt}}\left[ {\begin{array}{*{20}c}
   {\theta_t - \theta _\infty }  \\
   {\lambda_t - \lambda _\infty }  \\
\end{array}} \right]\\
=& \left[ {\begin{array}{*{20}c}
   { - \Phi ^T D^\beta \Phi } & { - (\gamma P^\pi  \Phi  - \Phi )^T D^\beta \Phi }  \\
   \Phi ^T D^\beta (\gamma P^\pi  \Phi  - \Phi ) & 0  \\
\end{array}} \right]\left[ {\begin{array}{*{20}c}
   {\theta_t - \theta _\infty }  \\
   {\lambda_t - \lambda _\infty }  \\
\end{array}} \right],
\end{align*}
whose origin is the globally asymptotically stable equilibrium point. Now, one can observe that it is identical to the ODE
\[
\frac{d}{{dt}}\left[ {\begin{array}{*{20}c}
   {\theta  - \theta _\infty }  \\
   {\lambda  - \theta _\infty }  \\
\end{array}} \right] = f_\infty  \left( {\left[ {\begin{array}{*{20}c}
   {\theta_t - \theta _\infty }  \\
   {\lambda_t - \theta _\infty }  \\
\end{array}} \right]} \right).
\]
Therefore, its origin is the globally asymptotically stable equilibrium point.

\item The third statement of~\cref{assumption:1}: The ODE, $\frac{d}{{dt}}\left[ {\begin{array}{*{20}c}
   \theta_t \\
   \lambda_t \\
\end{array}} \right] = f\left( {\left[ {\begin{array}{*{20}c}
   \theta_t \\
   \lambda_t \\
\end{array}} \right]} \right)$, is identical to the PDGD of~\cref{problem:5}. Therefore, it admits a unique globally asymptotically stable equilibrium point by~\cref{lemma:1}.

\item Next, we prove the remaining parts. Recall that the GTD update

Define the history  ${\cal G}_k : = (\varepsilon _k ,\varepsilon _{k - 1} , \ldots ,\varepsilon _1 ,\theta _k ,\theta _{k - 1} , \ldots ,\theta _0 ,\lambda _k ,\lambda _{k - 1} , \ldots ,\lambda _0 )$, and
the process $(M_k)_{k=0}^\infty$ with $M_k:=\sum_{i=1}^k {\varepsilon_i}$. Then, we can prove that $(M_k)_{k=0}^\infty$ is Martingale. To do so, we first prove ${\mathbb E}[\varepsilon_{k+1}|{\cal G}_k]=0$ by
\begin{align*}
&{\mathbb E}[\varepsilon _{k + 1} |{\cal G}_k ]\\
=& {\mathbb E}\left[ {\left. {\left[ {\begin{array}{*{20}c}
   {\Phi ^T e_{s_k } e_{s_k }^T \Phi \theta _k  + \Phi ^T (\gamma e_{s_k } e_{s_{k + 1} }^T  - I)^T \Phi \lambda _k }  \\
   {\Phi ^T (e_{s_k } e_{s_k }^T r_k  + \gamma e_{s_k } e_{s_{k + 1} }^T \Phi \theta _k  - e_{s_k } e_{s_k }^T \Phi \theta _k )}  \\
\end{array}} \right]} \right|{\cal G}_k } \right]\\
& - {\mathbb E}\left[ {\left. {f\left( {\left[ {\begin{array}{*{20}c}
   {\theta _k }  \\
   {\lambda _k }  \\
\end{array}} \right]} \right)} \right|{\cal G}_k } \right]\\
=&{\mathbb E} \left[ {\left. {f\left( {\left[ {\begin{array}{*{20}c}
   {\theta _k }  \\
   {\lambda _k }  \\
\end{array}} \right]} \right)} \right|{\cal G}_k } \right] - {\mathbb E}\left[ {\left. {f\left( {\left[ {\begin{array}{*{20}c}
   {\theta _k }  \\
   {\lambda _k }  \\
\end{array}} \right]} \right)} \right|{\cal G}_k } \right] = 0,
\end{align*}
where the second equality is due to the i.i.d. assumption of samples. Using this identity, we have
\begin{align*}
{\mathbb E}[M_{k+1}|{\cal G}_k]=& {\mathbb E}\left[ \left. \sum_{i=1}^{k+1}{\varepsilon_i} \right|{\cal G}_k\right]={\mathbb E}[\varepsilon_{k+1}|{\cal G}_k]+{\mathbb E}\left[ \left. \sum_{i=1}^k {\varepsilon_i} \right|{\cal G}_k \right]\\
=&{\mathbb E}\left[\left.\sum_{i=1}^k{\varepsilon_i} \right|{\cal G}_k \right]=\sum_{i=1}^k {\varepsilon_i}=M_k.
\end{align*}
Therefore, $(M_k)_{k=0}^\infty$ is a Martingale sequence, and $\varepsilon_{k+1} = M_{k+1}-M_k$ is a Martingale difference. Moreover, it can be easily proved that the second statement of the fourth condition of~\cref{assumption:1} is satisfied by algebraic calculations. Therefore, the fourth condition is met.
\end{enumerate}

\end{document}